\newtheorem{lemma}{Lemma}
\newtheorem{definition}{Definition}
\newtheorem{corollary}{Corollary}
\newtheorem{observation}{Observation}
\newtheorem{remark}{Remark}
\newcommand{\CEIL}[1]{\left\lceil#1\right\rceil}
\newcommand{\alg}{\ensuremath{\operatorname{\textsc{Alg}}}\xspace}
\newcommand{\opt}{\ensuremath{\operatorname{\textsc{Opt}}}\xspace}
\newcommand{\LFD}{\ensuremath{\operatorname{{LFD}}}\xspace}
\newcommand{\Mark}{\ensuremath{\operatorname{\textsc{Mark}}}\xspace}
\newcommand{\MarkPredict}{\ensuremath{\operatorname{\textsc{Mark\&Predict}}}\xspace}
\newcommand{\Markzero}{\ensuremath{\operatorname{\textsc{Mark0}}}\xspace}
\newcommand{\SET}[1]{\left\{#1\right\}}
\newcommand{\cost}{\zeta}
\newcommand{\page}{r}
\title{Paging with Succinct Predictions\footnote{Boyar, Favrholdt, and Larsen were supported in part by the Independent Research Fund Denmark, Natural Sciences, grant DFF-0135-00018B and in part by the Innovation Fund Denmark,
grant 9142-00001B, Digital Research Centre Denmark,
project P40: Online Algorithms with Predictions.
Polak was supported in part by Swiss National Science Foundation project Lattice Algorithms and Integer Programming (185030).
Part of the research was done during the Workshop on Algorithms with Predictions in the Bernoulli Center for Fundamental Studies at EPFL.}}
\author{Antonios Antoniadis$^1$ \and Joan Boyar$^2$ \and Marek Eliáš$^3$ \and Lene M. Favrholdt$^2$ \and Ruben Hoeksma$^1$ \and Kim S. Larsen$^2$ \and Adam Polak$^4$ \and Bertrand Simon$^5$}
\date{
$^1$ University of Twente, Enschede, Netherlands\\
$^2$ University of Southern Denmark, Odense, Denmark\\
$^3$ Bocconi University, Milan, Italy\\
$^4$ EPFL, Lausanne, Switzerland\\
$^5$ IN2P3 Computing Center and CNRS, Villeurbanne, France
}
\begin{document}

\maketitle

\abstract{Paging is a prototypical problem in the area of online algorithms. It has also played a central role in the development of learning-augmented algorithms -- a recent line of research that aims to ameliorate the shortcomings of classical worst-case analysis by giving algorithms access to predictions. Such predictions can typically be generated using a machine learning approach, but they are inherently imperfect. Previous work on learning-augmented paging has investigated predictions on (i)~when the current page will be requested again (\emph{reoccurrence predictions}), (ii)~the current state of the cache in an optimal algorithm (\emph{state predictions}), (iii)~all requests until the current page gets requested again, and (iv)~the relative order in which pages are requested. 

We study learning-augmented paging from the new perspective of requiring the least possible amount of predicted information. More specifically, the predictions obtained alongside each page request are limited to one bit only. We consider two natural such setups: (i)~\emph{discard predictions}, in which the predicted bit denotes whether or not it is ``safe'' to evict this page, and (ii) \emph{phase predictions}, where the bit denotes whether the current page will be requested in the next phase (for an appropriate partitioning of the input into phases). We develop algorithms for each of the two setups that satisfy all three desirable properties of learning-augmented algorithms -- that is, they are consistent, robust and smooth -- despite being limited to a one-bit prediction per request. We also present lower bounds establishing that our algorithms are essentially best possible.}

\section{Introduction}

\emph{Paging} (also known as \emph{caching}) is a classical online problem, and an important special case of several other online problems~\cite{BE98}, which can be motivated through resource management in operating systems. You are given a fast cache memory with capacity to simultaneously store at most a constant number, $k$, of pages. Requested pages, according to a sequence of \emph{page requests}, have to be loaded into the cache to be served by the operating system. More specifically, pages are requested one by one in an online fashion, and each request needs to be immediately served upon its arrival. Serving a page is done at zero cost if the requested page currently resides in the cache. If this is not the case, then a \emph{page fault} occurs and the page has to first be loaded into the cache (after potentially evicting some other page to make space). This incurs a fixed cost. The underlying algorithmic question is to decide which page to evict each time a page has to be loaded into the cache, with the goal to minimize the total incurred cost, i.e., the total number of page faults.

Paging has been extensively studied and is well-understood. There exists an optimal offline algorithm, $\LFD$ (\emph{longest forward distance}), that simply follows the so-called \emph{Belady's rule:} always evict the page that will be requested again the furthest in the future. Note that Belady's rule can only be applied to the offline variant of the problem, where all future page requests are known to the algorithm. With respect to online algorithms, no deterministic online algorithm can obtain a competitive ratio\footnote{Competitive ratio is the standard performance metric for online algorithms, see Section~\ref{sec:contrib} for a definition.} smaller than $k$~\cite{ST85}. Two simple deterministic algorithms that are $k$-competitive~\cite{ST85} exist: FIFO (evict the oldest page in the cache) and LRU (evict the least recently used/requested page from the cache). Fiat et al.~\cite{FKLMSY91} developed a randomized algorithm called \emph{\Mark} that is $(2H_k -1)$-competitive\footnote{$H_k=\sum_{i=1}^k1/i$ is the $k$'th harmonic number. Recall that $\ln k \leq H_k\leq 1+ \ln k$.}~\cite{ACN00}. Furthermore this is tight, up to a constant factor of $2$, since no randomized algorithm can obtain a competitive ratio better than $H_k$~\cite{FKLMSY91}. Later, optimal $H_k$-competitive randomized algorithms were discovered~\cite{ACN00,MS91}.

The above results are tight in the worst case,  although inputs encountered in  many practical situations  may allow for a better performance. The novel research area of \emph{learning-augmented algorithms} attempts to take advantage of such opportunities and ameliorate shortcomings of worst-case analysis by assuming that the algorithm has black-box access to a set of (e.g., machine-learned) predictions regarding the input. Naturally, the quality of these predictions is not known a priori, hence the goal is to design algorithms with a good performance on the following parameters:
{\em robustness}, which is the worst-case performance guarantee that holds independently of the prediction accuracy;
{\em consistency}, which is the competitive ratio under perfect predictions;
and {\em smoothness}, which describes the rate at which  the competitive ratio deteriorates with increasing prediction error.

Given the central role of paging within online algorithms, it is no surprise that learning-augmented paging has been extensively studied as well, and actually a significant number of papers in the area are either directly or indirectly linked to the paging problem. Examples include the seminal paper by Lykouris and Vassilvitskii~\cite{LV21}, who studied \emph{reoccurence predictions}, i.e., along with each page request the algorithm obtains a prediction on the timepoint of the next request of that page. Their results were later refined by Rohatgi~\cite{R20} and Wei~\cite{W20}. Jiang et al.~\cite{JiangP020} investigated the setting in which all requests until the next request of the currently requested page are predicted, whereas Bansal et al.~\cite{BansalCKPV22} considered predictions regarding the relative order in which the pages are requested. Antoniadis et al.~\cite{ACE0S20} looked into so-called \emph{state predictions} that predict the cache-contents of an optimal algorithm.

Although the above  algorithms have been analyzed with respect to their consistency, robustness and smoothness, no consideration has been made regarding the total amount of predicted information. Given that the predicted information needs to be computed through a separate black-box algorithm and also communicated to the actual paging algorithm for each request, a learning-augmented algorithm that is based on a large amount of predicted information may be impractical in a real-world application.

In this paper we study learning-augmented paging while taking a new approach, requiring a minimal amount of predicted information. We assume that the predictions must be encoded in only one bit per request. This is indeed the least possible amount of predicted information (up to a constant) since any (deterministic or randomized) algorithm that receives perfect predictions that can be encoded in sublinearly many bits (in the length of the request sequence) cannot be better than $H_k$-competitive~\cite{M16}. Moreover, there are binary classifiers producing one-bit predictions for paging~\cite{JainL16,ShiHJL19} which have great performance in practice (see Section~\ref{sec:related-work} for more details) and it is desirable to use them in learning-augmented algorithms.

We study two natural such {\emph{setups},} with one-bit predictions, which we call \emph{discard predictions} and \emph{phase predictions}. The predicted bit in discard predictions denotes whether LFD would evict the current page before it gets requested again. 
In phase predictions, the bit denotes whether the current page will be requested again in the following $k$-phase (the notion of a $k$-phase is based on marking algorithms, such as \Mark and LRU, and it is formally defined in Section~\ref{sec:prelim}). Both of these new setups can be interpreted as condensing the relevant information from the previously existing setups into one bit per request.

We develop algorithms for each of the two setups that satisfy all three desirable properties of learning-augmented algorithms -- that is, they are consistent, robust and smooth -- despite being limited to a one-bit prediction per request. We also present lower bounds establishing that our algorithms are essentially best possible.

\subsection{Our contribution}
\label{sec:contrib}

An important preliminary observation is that there is an asymmetry regarding prediction errors:
Wrongly evicting a page will generally only lead to one page-fault once that page is requested again, however keeping a page which should be evicted in the cache can lead to multiple page-faults while the algorithm keeps evicting pages that will be requested again soon.
For this reason we distinguish between two types of prediction errors. For a sequence of $n$ page requests, let $p\in\{0,1\}^n$ be the vector of predictions and $p^*\in\{0,1\}^n$ be the ground truth, where, intuitively, a value of $0$ means (in both setups) that, according to the prediction, the page requested should stay in cache.
We define $\eta_0$ and $\eta_1$ as the numbers of incorrect
predictions $0$ and $1$, respectively, usually leaving out the parameters $p$ and $p^*$ when they are understood:

\[\eta_h(p,p^*) = | \{ i \in [n] \mid p_i = h, p^*_i = 1-h\}|\,, \quad \text{for } h\in\{0,1\}.\]

In order to capture how different types of errors affect the cost of an algorithm, we generalize the notion of competitive ratio to what we call \emph{$(\alpha,\beta,\gamma)$-competitiveness}.

\begin{definition}
A learning-augmented online paging algorithm \alg is called $(\alpha,\beta,\gamma)$-com\-pe\-ti\-tive if there exists a constant $b$ (possibly depending on $k$) such that for any instance $I$ with ground truth $p^*$ and any predictions $p$,
\[
    \alg(I,p) \leq \alpha \cdot \opt(I) + \beta \cdot \eta_0(p,p^*) + \gamma \cdot \eta_1(p,p^*) + b\,,
\]
where $\alg(I,p)$ and $\opt(I)$ denote\footnote{Following a standard practice in online algorithms literature, in what follows, we abuse the notation and use \alg and \opt to denote both the algorithms and their respective costs incurred on the implicitly understood instance that we are currently reasoning about.} costs incurred on this instance by
the online algorithm and the offline optimal algorithm, respectively,
and $\eta_0, \eta_1$ denote the two types of error of predictions provided
to the online algorithm.
\end{definition}

{Note that the notion of $(\alpha,\beta,\gamma)$-competitiveness generalizes that of the (classical) competitive ratio: an algorithm is $c$-competitive if and only if it is $(c,0,0)$-competitive.} Furthermore, it is easy to see that $(\alpha,\beta,\gamma)$-competitiveness directly implies a consistency of $\alpha$; it also quantifies the smoothness of the algorithm. We can achieve robustness as follows: any deterministic $(\alpha,\beta,\gamma)$-competitive algorithm for paging can be combined with LRU or FIFO through the result of Fiat et al.~\cite{FKLMSY91} to give a deterministic algorithm with a consistency of $(1+\epsilon)\alpha$ and a robustness\footnote{Actually, Fiat et al.~\cite{FKLMSY91} show the more general result that one can combine $m$  algorithms such that for any input instance $I$ this combination incurs a cost that is within a factor $c_i$ from the cost of each corresponding algorithm $i$ on $I$. The constants $c_i$ can be chosen arbitrarily as long as they satisfy $\sum_{i=1}^m 1/c_i \le 1$.} of $\frac{1+\epsilon}{\epsilon} k$, for any $\epsilon > 0$. Similarly, any randomized $(\alpha,\beta,\gamma)$-competitive algorithm for paging can be combined (see~\cite{ACE0S20} and~\cite{BB00}) with an $H_k$-competitive algorithm~\cite{ACN00,MS91} to give a $((1+\epsilon )\alpha)$-consistent and $((1+\epsilon )H_k)$-robust algorithm.
Both of these combination approaches work independently of the considered prediction setup. We therefore focus the rest of the paper on giving upper and lower bounds for the $(\alpha,\beta,\gamma)$-competitiveness.

As explained at the beginning of this section, the two types of prediction errors have significantly different impact: keeping a page in cache while it was safe to evict is potentially much more costly than evicting a page that should have been kept. Hence, $\beta$ will intuitively be much larger than $\gamma$ in our results. Our lower bounds also show that $\alpha+\beta$ {cannot be} smaller than the best classical competitive ratio.

We remark that previous papers on learning-augmented paging (e.g., \cite{LV21,R20,W20})
analyze smoothness by expressing the (classical) competitive ratio as a function of the normalized prediction error $\frac{\eta}{\opt}$, and that our results could also be stated in that manner because every $(\alpha, \beta, \gamma)$-competitive algorithm is also $(\alpha + \beta \cdot \frac{\eta_0}{\opt} + \gamma \cdot \frac{\eta_1}{\opt})$-competitive in the classical sense.

\paragraph{Discard-predictions setup upper bounds.} In Section~\ref{sec:ub-discard} we develop a deterministic and a randomized algorithm for the discard-predictions setup:

\begin{restatable}{theorem}{ThmDiscardDeterministicUb}
\label{thm:discard-deterministic-ub}
There is a deterministic~$(1,k-1,1)$-competitive algorithm for the discard-predictions setup.
\end{restatable}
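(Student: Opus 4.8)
The plan is to design a deterministic algorithm that uses the discard prediction to decide which page to evict on a fault, and to charge its faults against \opt, $\eta_0$, and $\eta_1$ with the claimed coefficients $1$, $k-1$, and $1$. Recall that in the discard setup the predicted bit for a request is $1$ exactly when \LFD would evict that page before its next request (so it is ``safe'' to evict), and $0$ when the page should stay. The natural algorithm maintains, alongside the cache, a partition of the cached pages into those currently predicted as \emph{discardable} (most recent prediction was $1$) and those predicted as \emph{keep} (most recent prediction was $0$). On a fault, I would evict a discardable page if one exists; only if every cached page is currently marked keep would the algorithm be forced to evict a keep-page. The intuition is that, under perfect predictions, the discardable set behaves exactly like the pages \LFD is about to throw out, so the algorithm mimics \LFD and is $1$-competitive, giving consistency $\alpha = 1$.

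The key steps of the analysis would be as follows. First I would argue the consistency bound: when predictions are correct, every page the algorithm evicts is one that \LFD also evicts before its next use, so the algorithm never faults on a page that \LFD would still have in cache; a potential/charging argument or a direct simulation then shows $\alg \le \opt + b$. Second, I would handle the two error types separately. A $\gamma = 1$ bound for $\eta_1$ errors (a page predicted discardable but which \LFD would actually keep) should follow because wrongly evicting such a page causes at most one extra fault when it is requested again — this matches the asymmetry highlighted in the introduction. The delicate part is the $\beta = k-1$ bound for $\eta_0$ errors (a page predicted keep but which \LFD would discard). Such a page can clog the cache: because the algorithm refuses to evict keep-pages while any discardable page remains, a single mispredicted keep-page can survive across many requests and force the algorithm into a situation where all $k$ slots are keep-labeled, after which the algorithm must evict a keep-page and may repeatedly fault.

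The main obstacle I expect is precisely bounding the damage of $\eta_0$ errors by exactly $k-1$ per error rather than something larger. The argument should localize the blame: I would partition time according to when the cache becomes ``saturated'' with keep-pages, and show that each such saturated episode can be attributed to at least one fresh $\eta_0$ error, while the number of extra faults the algorithm suffers during or because of that episode is at most $k-1$ (since there are only $k$ cache slots and \opt must also pay for at least one of them). The factor $k-1$ rather than $k$ should come from the fact that in the worst configuration one of the $k$ pages involved is genuinely shared with \opt, so the algorithm's excess over \opt is one less than the full cache size. Making this charging injective — ensuring that no single $\eta_0$ error is blamed for more than $k-1$ excess faults across overlapping episodes — is the technically fiddly step, and I would most likely formalize it with a potential function that increases by at most $k-1$ whenever an $\eta_0$ error is introduced and pays for all subsequent forced evictions of keep-pages until the cache is no longer saturated.

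Finally, I would combine the three bounds into the single inequality $\alg \le \opt + (k-1)\,\eta_0 + \eta_1 + b$, matching the definition of $(1,k-1,1)$-competitiveness, and verify that the additive constant $b$ (which may depend on $k$) absorbs the bounded start-up effects of filling the initially empty cache.
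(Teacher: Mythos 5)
There is a genuine gap, and it sits exactly in the step you flag as delicate: bounding the damage of each $\eta_0$ error by $k-1$. When the cache is saturated with keep-pages, your algorithm evicts \emph{some} keep-page (you leave the choice unspecified). This cannot work as stated: a single mispredicted $0$-page can cause unboundedly many faults. Concretely, take the universe $\{p_1,\dots,p_{k+1}\}$, request $p_1,\dots,p_k$, then alternate $p_{k+1},p_k,p_{k+1},p_k,\dots$ for an arbitrarily long tail, all requests predicted $0$. Here \LFD faults once on $p_{k+1}$ and evicts $p_1$ (never requested again), so in this window $\opt$ adds $1$, $\eta_0=1$ (only $p_1$'s prediction is wrong; the $0$-predictions on $p_k$ and $p_{k+1}$ are \emph{correct} since \LFD keeps them), and $\eta_1=0$. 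But if the tie-breaking rule for "evict a keep-page" happens to pick the most recently used one, your algorithm thrashes: it evicts $p_k$ to serve $p_{k+1}$, then $p_{k+1}$ to serve $p_k$, and so on, faulting on every request of the tail. The "saturated episode" never ends, all its faults are on correctly predicted pages, and neither $\opt$ nor the error terms grow, so no charging scheme or potential function can salvage $\alg \le \opt + (k-1)\eta_0 + \eta_1 + b$. Your claim that an episode costs at most $k-1$ excess faults "since there are only $k$ cache slots" is false precisely because evicted $0$-pages can be re-requested and force further evictions within the same episode.

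The paper's algorithm contains one extra idea that your sketch is missing and that is the crux of the proof: when the cache is full of $0$-pages and a fault occurs, it does not evict a single page -- it \emph{flushes the entire cache}. This caps the cost of a saturation event at exactly $k$ evictions and, crucially, leaves free space in the cache, so re-requests of flushed pages are served by plain loads and cannot trigger further evictions until the cache fills up again with freshly predicted pages. The analysis then partitions the sequence into \emph{stages} delimited by flushes: in each stage, the $k$ flushed $0$-pages plus the request triggering the flush are $k+1$ distinct pages whose predictions were all received during that stage, so by Observation~\ref{obs:discard-error} at least one of these $0$-predictions is incorrect, and $\opt$ must also have evicted at least one of these $k+1$ pages within the stage. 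Hence the $k$ evictions of $0$-pages per stage are charged as $1$ to $\opt$ and at most $k-1$ to $\eta_0$, with distinct stages charging distinct errors. Your consistency argument ($\alpha=1$) and your $1$-page accounting ($\gamma=1$) are essentially the same as the paper's and are fine; what is missing is the flush (or some equally strong mechanism, such as a marking discipline restricted to saturated phases, which you would then have to specify and analyze separately) to make $\beta=k-1$ provable.
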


The algorithm realizing Theorem~\ref{thm:discard-deterministic-ub} is very simple and natural: On each page-fault, evict a page that is predicted as safe to evict, if such a page exists. If it does not exist, then just flush the cache, i.e., evict all pages that it contains. The analysis is based on deriving appropriate bounds on the page-faults for both \alg and \opt within any two consecutive flushes, as well as the respective prediction error.

\begin{restatable}{theorem}{ThmDiscardRandomizedUb}
\label{thm:discard-randomized-ub}
There is a randomized $(1,2H_k, 1)$-competitive algorithm for the discard-predictions setup.
\end{restatable}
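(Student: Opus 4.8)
The plan is to design a randomized algorithm in the spirit of the classical \Mark algorithm, but where the marking/eviction decisions are guided by the discard predictions. Recall that \Mark proceeds in $k$-phases: at the start of a phase all pages are unmarked, a requested page gets marked, and on a fault a page is evicted uniformly at random from the unmarked pages. Its $2H_k$-competitiveness comes from a charging argument that counts, within each phase, the \emph{clean} requests (pages not present at the phase start) against \emph{stale} requests (previously marked pages that were evicted during the phase). The key idea here is to bias eviction toward pages predicted as safe ($p_i=0$ means ``keep'', so a prediction of $1$ marks a page as discardable): on a fault, first try to evict an unmarked page whose prediction says it is safe to discard, and only fall back to a random unmarked eviction when no such page exists. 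Under perfect predictions this should never evict a page that will be requested again in the current phase, driving the consistency term down to $1$, while the random fallback preserves the $H_k$-type behavior that yields robustness and controls the cost charged to \opt.

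First I would fix the phase structure and carefully define the eviction rule, distinguishing within each phase the pages that are predicted-safe from those predicted-unsafe among the currently unmarked pages. Second, I would reprove the standard \Mark bound in this augmented setting: partition the faults in a phase into clean requests and stale requests, bound the expected number of stale faults. The crucial refinement is that the random-eviction step is only invoked when the ``safe'' unmarked pages are exhausted, so I would bound the expected cost of stale faults by an $H_k$-type sum but only over the requests where the algorithm was forced to evict a genuinely useful (non-discardable) page --- and each such forced eviction must be blamed on a prediction error, since a correct prediction would have supplied a truly safe page to evict. Third, I would relate clean requests to \opt via the usual argument that \opt faults at least once per clean page amortized over two consecutive phases, giving the factor $1$ in front of \opt. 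Finally, I would account the two error terms: a wrong $0$-prediction (a page predicted safe but actually needed again soon, $\eta_0$) can force a cascade of bad random evictions and should absorb the $2H_k$ coefficient, whereas a wrong $1$-prediction (a page predicted discardable but actually safe, $\eta_1$) costs at most one extra fault when that page is re-requested, giving the coefficient $1$.

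The main obstacle, as I see it, is the probabilistic accounting of the stale faults in the presence of prediction errors. In the clean-prediction regime the analysis should collapse cleanly to ``at most one fault per clean page,'' but once a wrong $0$-prediction injects a needed page into the pool of ``safe'' candidates, the random fallback can evict it and trigger the logarithmic harmonic blowup that is characteristic of \Mark. Pinning down exactly how many expected extra faults a single wrong $0$-prediction can cause --- and showing this is bounded by $2H_k$ per error rather than accumulating across errors within a phase --- is the delicate step. I would handle this by a phase-local coupling or potential argument that charges each unit of expected stale-fault cost either to a clean request (paid by \opt) or to a distinct prediction error, so that the $2H_k$ and $1$ coefficients emerge exactly as in the claimed $(1, 2H_k, 1)$ bound; verifying that the charges are disjoint and that no error is overcounted is where the real work lies.
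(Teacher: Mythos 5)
Your proposal is essentially the paper's \MarkPredict algorithm -- a marking algorithm that, on a fault, preferentially evicts an unmarked $1$-page -- but that is the paper's solution for the \emph{phase-predictions} setup, where it achieves $(2,H_k,1)$-competitiveness, not $\alpha=1$. Two steps in your outline fail for the discard setup. First, your charging of clean requests to \opt is off by a factor of $2$: the ``usual argument'' you invoke shows that phases $i-1$ and $i$ together contain $k+c_i$ distinct pages, so \opt faults at least $c_i$ times across those two phases; summing over even and odd phases gives $\opt\ge\frac12\sum_i c_i$, i.e., a coefficient of $2$ on \opt, not $1$. Second, and more fundamentally, no marking algorithm can be $1$-consistent here: a page requested \emph{during} the current phase with a correct $1$-prediction becomes marked and hence un-evictable, even though \LFD does evict it. A later fault in the same phase then forces your algorithm to evict an unmarked $0$-page that \LFD retains, and the next request to that page is a fault for you but not for \LFD, with all predictions correct -- so the loss cannot be charged to $\eta_0$, $\eta_1$, or \opt. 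Concretely, with $k=3$ and requests $a,b,c,d,c,e,a,b$, where $b$ (at its first request) and $c$ (at its second request) carry correct $1$-predictions and everything else is correctly predicted $0$, your algorithm evicts $b$ on $d$'s arrival; but on $e$'s arrival the only unmarked page is the $0$-page $a$ (the $1$-page $c$ is marked), so $a$ is evicted and faults at position $7$ while \LFD hits. Repeating this gadget makes the gap grow linearly with zero prediction error, defeating any additive constant.

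The paper's algorithm \Markzero avoids both problems by abandoning the marking paradigm: every $1$-page is evicted \emph{immediately upon being requested}, without waiting for a fault and even when the cache is not full. This has two consequences your approach cannot replicate. Evictions of correctly-predicted $1$-pages now correspond one-to-one to actual \LFD evictions (each such page is, by definition of the ground truth, evicted by \LFD before its next request), which yields the bound $\opt+\eta_1$ on $1$-page evictions directly, with no factor-$2$ phase amortization -- this is where $\alpha=1$ comes from. Moreover, whenever the cache is full and a fault occurs, the cache consists entirely of $0$-pages, so Observation~\ref{obs:discard-error} certifies that a wrong $0$-prediction is present; the algorithm then performs \Mark-style random evictions among unmarked $0$-pages, costing an expected $H_k$ per forced eviction, and each forced eviction is charged to an error whose prediction was received in the current or the preceding phase, whence the coefficient $2H_k$ on $\eta_0$. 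Your intuition about which error type should absorb which coefficient is right, but without the immediate-eviction idea the two key charging steps -- the one-to-one correspondence with \LFD evictions and the certification of a $0$-error at every forced random eviction -- are simply not available.
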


Compared to the deterministic algorithm above,
the algorithm from Theorem~\ref{thm:discard-randomized-ub}
uses an approach resembling the classical \Mark algorithm
when evicting pages predicted 0.
However, we note that it does not fall into the class of so-called \emph{marking algorithms} (see Section~\ref{sec:prelim}),
as pages predicted 1 are evicted sooner. This is essential
for achieving $\alpha=1$ but requires a different definition
of phases and a novel way of charging evictions of pages predicted 0.

\paragraph{Phase-predictions setup upper bounds.}
For phase-predictions, in Section~\ref{sec:ub-phase} we design an algorithm called \MarkPredict
which can be seen as a modification of the classical \Mark algorithm
giving priority to pages predicted 1 when choosing a page to evict.
We prove two bounds for this algorithm.
The first one is sharper for small $\eta_1$
and, in fact, it holds even with deterministic evictions of
pages predicted 1.

\begin{restatable}{theorem}{ThmMarkRandA}
\label{thm:markrand1}
\MarkPredict is a randomized~$(2,H_k,1)$-competitive algorithm for the phase-predictions setup.
\end{restatable}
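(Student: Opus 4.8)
The plan is to analyze \MarkPredict phase by phase, using the standard partition of the request sequence into $k$-phases induced by the marking structure (which depends only on the requests, not on the predictions). Since every page in the cache at the end of phase $i-1$ is marked, the cache at the start of phase $i$ consists of exactly $k$ \emph{stale} (unmarked) pages, and the prediction attached to each such page at its last request in phase $i-1$ refers precisely to whether that page is requested again during phase $i$. This lets me classify the stale pages of phase $i$ into four groups according to their prediction and whether they are actually requested in phase $i$: predicted $0$ and requested ($g_i^0$), predicted $0$ and not requested ($e_i^0$), predicted $1$ and requested ($e_i^1$), predicted $1$ and not requested ($g_i^1$). By definition $\sum_i e_i^0 = \eta_0$ and $\sum_i e_i^1 = \eta_1$, and if $c_i$ denotes the number of clean pages in phase $i$, then counting the $k$ distinct pages of the phase gives $c_i = e_i^0 + g_i^1$.

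First I would record two structural facts. Since marking keeps a page in the cache once it is requested within the phase, each page is evicted at most once per phase, so the number of evictions in phase $i$ equals the number of faults, which splits as $c_i + R_i$, where $R_i$ counts faults on stale pages evicted earlier in the same phase. Writing $R_i = R_i^0 + R_i^1$ for the predicted-$0$ and predicted-$1$ contributions, the preference rule (evict an unmarked predicted-$1$ page whenever one exists) means predicted-$0$ pages are removed only under pressure; a short bookkeeping argument then shows that the number of predicted-$0$ evictions is exactly $e_i^0 + R_i^0$, since the $e_i^0$ never-requested ones are removed permanently while the $R_i^0$ re-faulting ones are removed and later reloaded. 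Because a predicted-$1$ page can re-fault at most once and only if it is actually requested, I get the deterministic bound $R_i^1 \le e_i^1$, which already yields the $\gamma=1$ term and shows that evictions of predicted-$1$ pages may be chosen deterministically.

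The heart of the proof is bounding $\mathbb{E}[R_i^0]$. The crucial observation is that the evictions restricted to predicted-$0$ pages behave exactly like the classical \Mark process on the set $S_i^0$ of predicted-$0$ stale pages: each such eviction removes a uniformly random currently-unmarked predicted-$0$ page, requests mark them, and the only pages creating permanent holes (removed and never reclaimed within the phase) are the $e_i^0$ never-requested ones. I would therefore replay the standard \Mark argument inside this subsystem: processing the requested predicted-$0$ pages in request order, the probability that the $j$-th of them has already been evicted is at most the number of permanent-hole evictions made so far divided by the number of still-unmarked predicted-$0$ pages, and summing this harmonic series gives $\mathbb{E}[R_i^0] \le (H_k-1)\,e_i^0$. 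Combining the three contributions yields the per-phase bound $\mathbb{E}[\text{faults in phase } i] \le c_i + e_i^1 + H_k\,e_i^0$.

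Finally I would sum over all phases and invoke the classical marking lower bound $\opt \ge \tfrac12\sum_i c_i - O(k)$, so that $\sum_i c_i \le 2\,\opt + O(k)$; together with $\sum_i e_i^0 = \eta_0$ and $\sum_i e_i^1 = \eta_1$ this gives $\mathbb{E}[\MarkPredict] \le 2\,\opt + H_k\,\eta_0 + \eta_1 + b$, as claimed. I expect the main obstacle to be making the third step fully rigorous: one must argue that the predicted-$0$ evictions, although triggered externally (by clean pages, by reloads, and only once the predicted-$1$ pages are exhausted) and interleaved arbitrarily with requests, still constitute an instance of the \Mark process on $S_i^0$ with exactly $e_i^0$ permanent holes, so that the harmonic bound applies verbatim; correctly accounting for the interaction between the two eviction pools, and for pages that become marked between successive evictions, is the delicate part.
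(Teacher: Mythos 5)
Your proposal is correct and follows essentially the same route as the paper's proof: a per-phase \Mark-style analysis in which faults on new pages are charged to $2\opt$ via $\opt \ge \frac12\sum_i c_i$, faults on stale $1$-pages are charged deterministically to $\eta_1$ (hence the remark that random eviction of $1$-pages is not needed here), and faults on stale $0$-pages are bounded by a harmonic sum whose ``hole'' count is identified with the incorrect $0$-predictions through the counting identity $c_i = e_i^0 + g_i^1$ (the paper's Observation~\ref{obs:phase-error}). The only difference is technical: the paper simplifies the harmonic-sum step by assuming WLOG that all new pages of a phase arrive first, whereas you analyze the hole process on the predicted-$0$ subsystem directly (including holes created by re-requested $1$-pages), which is slightly more careful but yields the same bound $2\opt + H_k\,\eta_0 + \eta_1$.
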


The second bound exploits the random eviction of pages predicted 1
and gives a much stronger result if $\eta_1$ is relatively large.

\begin{restatable}{theorem}{ThmMarkRandB}
\label{thm:markrand2}
\MarkPredict is a randomized $\big(2, H_k, \gamma(\eta_1/\opt)\big)$-competitive algorithm for the phase-predictions setup, where
\[ \gamma(x) = 2x^{-1} \left( \ln (2x+1) + 1\right). \]
\end{restatable}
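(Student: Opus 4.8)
The plan is to reuse the entire analysis behind Theorem~\ref{thm:markrand1} except for the single term it handles pessimistically, and to replace that term by a sharper randomized estimate. Recall that \MarkPredict proceeds in (marking) phases and that its cost in phase $i$ splits into faults on clean pages, faults tied to pages predicted $0$, and faults on stale pages that were predicted $1$ yet are in fact requested during the phase -- the wrongly-predicted-$1$ pages. The first two groups already produce the additive contributions $2\,\opt$ and $H_k\,\eta_0$ (the consistency term comes from $\opt\ge\tfrac12\sum_i c_i$, where $c_i$ is the number of clean pages in phase $i$, and the $H_k\,\eta_0$ term from the usual marking bookkeeping), and neither depends on how pages predicted $1$ are evicted. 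In Theorem~\ref{thm:markrand1} the third group is bounded trivially, charging one fault to each wrongly-predicted-$1$ page, for a total of $\eta_1$. The whole improvement therefore lies in re-estimating the expected number of faults on wrongly-predicted-$1$ pages, exploiting that \MarkPredict evicts pages predicted $1$ uniformly at random.

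First I would fix a phase $i$ and let $c_i$ be its number of clean pages and $w_i$ the number of pages that are predicted $1$ yet requested during phase $i$ (so that $\sum_i w_i=\eta_1$). The goal is a per-phase bound of the form
\[
   \mathbb{E}[\text{faults on wrongly-predicted-}1\text{ pages in phase }i]\ \le\ c_i\Bigl(\ln\bigl(4w_i/c_i+1\bigr)+1\Bigr).
\]
To obtain it I would track the sequence of evictions hitting the pool of unmarked pages predicted $1$. Each such eviction removes a uniformly random page of the current pool, so summing, over the evictions, the probability $w_i/p_t$ of hitting a not-yet-requested wrongly-predicted-$1$ page -- where $p_t$ is the pool size at the $t$-th eviction -- becomes a harmonic sum once one accounts for the pool shrinking as pages get evicted or marked. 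This harmonic sum collapses to a logarithm of the ratio between the initial and the final pool size, which is exactly what yields the $\ln(\cdot)$ factor and the logarithmic dependence on $w_i/c_i$.

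The main obstacle is controlling the cascade hidden in this estimate: reloading a faulting wrongly-predicted-$1$ page itself forces a fresh random eviction, which may hit a further wrongly-predicted-$1$ page that has not yet been requested, and so on. I therefore expect the delicate step to be showing that the total number of evictions into the predicted-$1$ pool stays of order $c_i$ plus the faults it generates, so that the harmonic sum is taken over a controlled range; one also separates the generous regime -- where the pool never empties and pages predicted $0$ are never touched -- from the complementary regime, whose cost is already charged in the base bound. This cascade/pool-depletion analysis is precisely where the randomized eviction rule is essential, matching the observation that Theorem~\ref{thm:markrand1} survives deterministic evictions whereas this sharper bound does not.

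Finally I would aggregate over phases. Writing $h(y)=\ln(2y+1)+1$, the per-phase bound reads $2\,(c_i/2)\,h\bigl(w_i/(c_i/2)\bigr)$, so with weights proportional to $c_i$ and using that $h$ is concave and increasing, Jensen's inequality gives
\[
   \sum_i c_i\Bigl(\ln\bigl(4w_i/c_i+1\bigr)+1\Bigr)\ \le\ 2A\,\bigl(\ln(2\eta_1/A+1)+1\bigr),\qquad A=\tfrac12\textstyle\sum_i c_i,
\]
since $\sum_i w_i=\eta_1$. A short monotonicity check shows $A\mapsto 2A(\ln(2\eta_1/A+1)+1)$ is increasing, so the standard marking inequality $\opt\ge\tfrac12\sum_i c_i$ (i.e.\ $A\le\opt$) lets me replace $A$ by $\opt$ and obtain $2\,\opt\,(\ln(2\eta_1/\opt+1)+1)=\gamma(\eta_1/\opt)\cdot\eta_1$. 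Combining this with the unchanged $2\,\opt+H_k\,\eta_0$ contribution and the additive constant $b$ yields the claimed $\bigl(2,H_k,\gamma(\eta_1/\opt)\bigr)$-competitiveness.
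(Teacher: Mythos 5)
Your overall architecture---keep the $2\opt$ and $H_k\,\eta_0$ contributions, prove a per-phase bound of the form $c_i\bigl(\ln(O(w_i/c_i)+1)+O(1)\bigr)$ on faults of wrongly-predicted-$1$ pages, then aggregate by Jensen/concavity with weights $c_i$ and use monotonicity together with $\sum_i c_i\le 2\opt$---is exactly the paper's plan: the paper proves a per-phase lemma bounding the expected phase cost by $c\,(H_{\eta_1+c}-H_c+1)+H_k\,\eta_0$ and then performs the same equal-ratios aggregation. The gap is in the one step that carries all the content: your derivation of the per-phase logarithmic bound is circular and would fail. You propose to sum, over evictions, the probability $w_i/p_t$ that the $t$-th random eviction hits a not-yet-requested wrongly-predicted-$1$ page, where $p_t$ is the current pool size. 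But the number of such evictions is itself about $c_i$ plus the number $F$ of faults you are trying to bound, and the ratio (wrong pages remaining in pool)$/p_t$ need not be small: in the extreme (and instructive) case where \emph{every} $1$-page in the cache at the start of the phase is wrongly predicted, each pool eviction hits a page that will be requested later in the phase with probability $1$, so your estimate degenerates to $\mathbb{E}[F]\le c_i+\mathbb{E}[F]$, which is vacuous. No control on ``the total number of evictions into the pool'' repairs this, because the estimate is circular precisely when $w_i$ is large---the regime this theorem is designed for.

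The paper's lemma avoids this by summing over \emph{request} (marking) events rather than evictions, with the roles of the two quantities swapped. When the $j$-th $1$-page from the starting set $L$ is marked, the probability that it is missing equals (number of missing unmarked $1$-pages)/(number of unmarked $1$-pages) $=c_a/(|L|-(j-1))$, and the numerator stays fixed at $c_a$ (the number of new pages so far) precisely because of the cascade you worry about: every reload of a faulting $1$-page is paired with the eviction of another pool page, so the missing count is invariant. Summing gives $c_a\,(H_{|L|}-H_{c_a})$, and the dependence on $\eta_1$ enters only afterwards, through the combinatorial observation that every page of $L$ marked during the phase was mispredicted, hence $|L|\le \eta_1(i)+c_a$. (The residual faults after the pool empties---your ``complementary regime''---are charged to $H_k\,\eta_0$ via Observation~\ref{obs:phase-error}.) So the harmonic sum must have numerator $c_a$, not $w_i$; with your numerator it cannot collapse to $c_i\ln(w_i/c_i)$. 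Your aggregation step is correct and matches the paper, but as it stands the proposal lacks a valid proof of the per-phase bound, which is the heart of the theorem.
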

In other words, the (expected) cost of \MarkPredict is at most
\[2\left(\ln\left(\frac{2\eta_1}{\opt}+1\right) + 2\right) \cdot \opt + H_k \cdot \eta_0.\]

Note that this expression should not be considered when $\eta_1\leq\opt$ as $\gamma(1)>1$ so the guarantee of the previous theorem would then be stronger. For $\eta_1>\opt$, multiple possibilities exist to phrase the above expression into our $(\alpha,\beta,\gamma)$-competitiveness notion, so we chose the one matching the previously established value of $\alpha$.
To illustrate the gain over the previous bound, with $\eta_1/\opt = \Omega(k)$, we obtain $\gamma(\eta_1/\opt) = O\left(\frac{\log k}{k}\right)$, thus matching the lower bound of Theorem~\ref{thm:randlowerbound}.

\paragraph{Lower bounds.}
In Section~\ref{sec:lower-bounds}, we give lower bounds that show that the upper bounds above are essentially tight. More specifically, we prove the following for the two considered setups.

\begin{restatable}{theorem}{ThmDeterministicLowerBound}
    \label{thm:deterministiclowerbound}
    In both the discard-predictions and phase-predictions setups, there is no deterministic $(\alpha,\beta,\gamma)$-competitive  algorithm such that
    either $\alpha+\beta < k$ or $\alpha+(k-1)\cdot\gamma < k$.
\end{restatable}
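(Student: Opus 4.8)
The plan is to prove the two inequalities with a single family of hard instances that differ only in the supplied predictions. Fix an arbitrary deterministic algorithm \alg. Over a universe of $k+1$ pages I would run the classical adaptive adversary that always requests the single page currently missing from \alg's cache; this forces a page fault on every one of the $n$ requests, so $\alg = n$. By the standard Sleator--Tarjan analysis, the offline optimum on a $(k+1)$-page instance faults at most once every $k$ requests, so $\opt \le n/k + O(1)$. I would also record the quantity $T := |\{i : p^*_i = 1\}|$, the number of requests whose ground-truth bit is $1$, and relate it to \opt in both directions: charging each true-$1$ request to a distinct offline eviction gives $T \le \opt + O(k)$, and charging each offline eviction of a later-reused page to a distinct true-$1$ request gives $T \ge \opt - O(k)$. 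In the discard setup these charges are read directly off \LFD's eviction schedule; in the phase setup they follow from comparing the distinct-page sets of consecutive $k$-phases, whose symmetric difference simultaneously counts the clean pages (driving \opt) and the pages not reused in the next phase (driving $T$).

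For the first inequality I set every prediction to $0$, so that $\eta_1 = 0$ and $\eta_0 = T \le \opt + O(k) \le n/k + O(k)$. Substituting into the competitiveness inequality $\alg \le \alpha\,\opt + \beta\,\eta_0 + \gamma\,\eta_1 + b$ gives $n \le (\alpha + \beta)(n/k + O(k)) + b$; dividing by $n/k$ and letting $n \to \infty$ yields $\alpha + \beta \ge k$.

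For the second inequality I set every prediction to $1$, so that $\eta_0 = 0$ and $\eta_1 = n - T \le n - \opt + O(k)$. Substituting gives $n \le \alpha\,\opt + \gamma(n - \opt) + O(\gamma k) + b$, i.e.\ $n(1-\gamma) \le (\alpha - \gamma)\,\opt + C$ for a constant $C$ independent of $n$. If $\alpha \ge \gamma$, I bound the right-hand side using $\opt \le n/k + O(1)$ and take $n \to \infty$ to obtain $k(1-\gamma) \le \alpha - \gamma$, which rearranges to $\alpha + (k-1)\gamma \ge k$. If instead $\alpha < \gamma$, the right-hand side is at most $C$ while the left-hand side grows linearly unless $\gamma \ge 1$; combining $\gamma \ge 1$ with the elementary fact $\alpha \ge 1$ (no algorithm beats \opt even with perfect predictions) again gives $\alpha + (k-1)\gamma \ge 1 + (k-1) = k$.

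The main obstacle I anticipate is the exact accounting of $T$ against \opt uniformly across both prediction setups, together with the boundary effects that I have absorbed into the $O(k)$ terms. In the discard setup the ground truth is defined through \LFD and the charging is transparent, but in the phase setup I must argue that the per-phase contributions to \opt and to $T$ coincide up to lower-order terms, and I must ensure the ground-truth bits are well defined even when the adaptive adversary produces few complete phases. In the latter degenerate regime \opt is anyway small, so the case analysis above still delivers the bound; making this rigorous, and verifying that the additive constants do not interfere with the $n \to \infty$ limits, is where the real work lies.
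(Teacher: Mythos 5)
Your proposal is correct and follows essentially the same route as the paper: the same adaptive adversary on a universe of $k+1$ pages forcing $\alg=n$, with all-$0$ predictions yielding $\alpha+\beta\ge k$ and all-$1$ predictions yielding $\alpha+(k-1)\cdot\gamma\ge k$, using the fact that the ground truth contains roughly one $1$-bit per $k$-phase so that $\eta_0\le\opt+O(k)$ in the first instance and $\eta_1\le n-\opt+O(k)$ in the second. If anything, your explicit case analysis when $\alpha<\gamma$ (forcing $\gamma\ge 1$ and then invoking $\alpha\ge 1$) is cleaner than the paper's terse handling of that same step.
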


This directly implies that if $\alpha$ is a constant independent of $k$, then $\beta=\Omega(k)$ and $\gamma=\Omega(1)$. A special case is that any $1$-consistent deterministic algorithm must have $\beta$ at least $k-1$ and $\gamma$ at least 1, matching the upper bound of Theorem~\ref{thm:discard-deterministic-ub}, or, more precisely:

\begin{corollary}
\label{cor:detlower}
In both setups, no deterministic paging algorithm is $(1,k-1-\epsilon,\gamma)$- or $(1,\beta, 1-\epsilon)$-competitive, for any constant $\epsilon>0$ and any value of $\beta$ and $\gamma$. 
\end{corollary}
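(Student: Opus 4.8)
The plan is to derive the corollary directly from Theorem~\ref{thm:deterministiclowerbound} by specializing its parameters; no new adversarial construction is needed. Recall that the theorem rules out any deterministic $(\alpha,\beta,\gamma)$-competitive algorithm (in either setup) whenever $\alpha+\beta<k$ or $\alpha+(k-1)\gamma<k$; equivalently, a competitive algorithm must satisfy both $\alpha+\beta\geq k$ and $\alpha+(k-1)\gamma\geq k$. Each of the two impossibility claims will follow by substituting the stated values into one of these two inequalities and deriving a contradiction.

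For the first claim, I would assume toward a contradiction that some deterministic algorithm is $(1,k-1-\epsilon,\gamma)$-competitive for a constant $\epsilon>0$ and an arbitrary $\gamma$. Substituting $\alpha=1$ and $\beta=k-1-\epsilon$ yields
\[ \alpha+\beta = 1+(k-1-\epsilon) = k-\epsilon < k, \]
which contradicts the first inequality of the theorem no matter what $\gamma$ is. For the second claim, I would instead assume a $(1,\beta,1-\epsilon)$-competitive algorithm with arbitrary $\beta$, and substitute $\alpha=1$ and $\gamma=1-\epsilon$ to get
\[ \alpha+(k-1)\gamma = 1+(k-1)(1-\epsilon) = k-(k-1)\epsilon < k, \]
where the strict inequality holds because $(k-1)\epsilon>0$ for $k\geq 2$ and $\epsilon>0$. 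This contradicts the second inequality, independently of $\beta$.

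Since the argument is a pure substitution into an already-proved theorem, there is no genuine obstacle. The only point deserving a moment of attention is the second claim: the coefficient $k-1$ multiplying $\gamma$ is what turns a shortfall of $\epsilon$ in $\gamma$ into a shortfall of $(k-1)\epsilon$ in the combined bound, and this shortfall is strictly positive exactly because $k\geq 2$, the meaningful regime for paging (for $k=1$ the coefficient vanishes and paging is trivial, so the statement is degenerate). Read together, the two claims certify that the triple $(\alpha,\beta,\gamma)=(1,k-1,1)$ is simultaneously optimal in its $\beta$- and $\gamma$-coordinates, thereby matching the upper bound of Theorem~\ref{thm:discard-deterministic-ub}.
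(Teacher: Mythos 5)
Your proposal is correct and matches the paper's intent exactly: the paper states Corollary~\ref{cor:detlower} as an immediate special case of Theorem~\ref{thm:deterministiclowerbound}, obtained by substituting $\alpha=1$ with $\beta=k-1-\epsilon$ (violating $\alpha+\beta\geq k$) or $\gamma=1-\epsilon$ (violating $\alpha+(k-1)\gamma\geq k$), precisely as you do. Your added remark that the second inequality requires $k\geq 2$ is a fair point of care, though the paper leaves it implicit since that is the only meaningful regime.
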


An analogous lower bound can be obtained for randomized algorithms as well.

\begin{restatable}{theorem}{ThmRandLowerBound}
\label{thm:randlowerbound}
In both the discard-predictions and phase-predictions setups, there is no $(\alpha,\beta,\gamma)$-competitive randomized algorithm such that either $\alpha+\beta < H_{k}$ or $\alpha+(k-1)\cdot\gamma < H_k$, where $H_i= \ln i + O(1)$ is the $i$-th harmonic number.
\end{restatable}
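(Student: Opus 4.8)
The plan is to establish the two inequalities separately via Yao's principle, constructing in each case a distribution over inputs (with their ground-truth predictions) on which every deterministic algorithm incurs large expected cost. For both bounds the inputs will be drawn from the classical randomized paging lower bound construction on $k+1$ pages, where a random request sequence forces any deterministic algorithm to fault an expected $H_k$ times per phase while $\opt$ faults only a constant number of times. The key is to couple this construction with a choice of ground-truth predictions that makes $\eta_0$ or $\eta_1$ controllable, so that the $\beta\cdot\eta_0$ or $\gamma\cdot\eta_1$ terms in the definition of $(\alpha,\beta,\gamma)$-competitiveness cannot absorb the cost.

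For the bound $\alpha+\beta\geq H_k$, I would set the ground truth so that every request is predicted $0$ (meaning ``keep in cache''), i.e. $p^*\equiv 0$, and have the adversary supply correct predictions $p=p^*$ so that $\eta_0=\eta_1=0$. Then the $\beta$ and $\gamma$ terms vanish entirely, and the inequality in the definition reduces to $\alg\leq\alpha\cdot\opt+b$; on the random $(k+1)$-page instance this forces $\alpha\geq H_k$, hence certainly $\alpha+\beta\geq H_k$. The delicate point is verifying that in both the discard and phase setups one can realize a valid ground truth that is identically $0$ while still keeping $\opt$ small and forcing the $H_k$ lower bound — this requires arguing that the standard construction is consistent with the semantics of $p^*$ in each setup (that LFD/the marking phases behave as the all-zero prediction asserts), which I expect to be the main technical obstacle.

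For the second bound $\alpha+(k-1)\gamma\geq H_k$, the idea is to make the $0$-predictions genuinely unhelpful by forcing the algorithm to rely on evicting pages predicted $1$, thereby converting the $H_k$ faulting cost into $\gamma\cdot\eta_1$ charges amortized over $k-1$ requests. Concretely, I would design an instance where, within each phase, the ground truth marks roughly $k-1$ pages as predicted $1$ (safe to evict) but the adversary's request sequence makes exactly one of them (chosen randomly) be needed again, so that any deterministic algorithm making its eviction decisions is wrong on an expected $\Theta(1)$ fraction while $\eta_1$ scales like $k-1$ per unit of $\opt$. Balancing the expected cost against $\alpha\cdot\opt+\gamma\cdot\eta_1$ then yields the factor $k-1$ in front of $\gamma$. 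The main challenge here is to arrange the ground-truth predictions and the request distribution so that $\eta_1$ is tightly controlled (close to $k-1$ per phase) while the expected number of forced faults remains $\Omega(H_k)$; getting these two quantities to trade off cleanly, and checking that the construction respects the ground-truth semantics in \emph{both} setups simultaneously, is where the careful bookkeeping lies.

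Throughout, I would invoke Yao's principle to pass from the randomized lower bound to a statement about the best deterministic algorithm against a fixed distribution, and then note that the resulting inequalities must hold for the constant $b$ being fixed independently of the instance length, so taking sufficiently many phases drives the additive term to insignificance and yields the stated strict inequalities $\alpha+\beta<H_k$ and $\alpha+(k-1)\gamma<H_k$ as impossibilities. Since the construction is essentially the same in the two setups up to reinterpreting the meaning of the bit, I expect the discard-predictions and phase-predictions cases to share most of the argument, differing only in the lemma that certifies the ground-truth predictions are the correct ones for that setup.
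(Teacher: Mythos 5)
There is a genuine gap, and it is fatal to the first half of your plan. The ground truth $p^*$ is not a free parameter the adversary can decree: it is determined by the instance itself (by \LFD's evictions in the discard setup, by next-phase membership in the phase setup). On the uniform random $(k+1)$-page sequence the ground truth \emph{cannot} be identically $0$: with probability $1$ every page is requested again, so the page \LFD evicts in each $k$-phase has ground truth $1$, i.e.\ one ground-truth-$1$ request per phase (and likewise, in the phase setup, the page absent from the next phase is a ground-truth $1$). More decisively, your intermediate claim --- that a distribution with \emph{correct} predictions and $\eta_0=\eta_1=0$ forces $\alpha\geq H_k$ --- contradicts the paper's own upper bounds: Theorem~\ref{thm:discard-randomized-ub} exhibits a $(1,2H_k,1)$-competitive algorithm, whose cost on any instance with zero prediction error is at most $\opt+b$. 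So no such distribution can exist, and the ``main technical obstacle'' you flag is not technical but insurmountable. The paper's actual construction embraces the error instead of eliminating it: feed all-$0$ predictions on the random sequence, observe that then $\eta_0=\opt$ (exactly one wrong $0$ per phase) and $\eta_1=0$, and conclude from $\frac{n}{k+1}\leq(\alpha+\beta)\cdot\frac{n}{(k+1)H_k}$ that $\alpha+\beta\geq H_k$; the errors being proportional to $\opt$ is precisely why the bound couples $\alpha$ with $\beta$ rather than bounding $\alpha$ alone.

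Your second bound has the right flavor but is underspecified at the crucial point, and the error bookkeeping is inverted. If $k-1$ pages genuinely have ground truth $1$ and only one of them is needed again, then only that one prediction is wrong, giving $\eta_1=1$ per phase, not $k-1$; to charge $(k-1)\gamma$ per unit of \opt you need $k-1$ pages \emph{predicted} $1$ whose ground truth is $0$ (they recur). Moreover, ``one randomly chosen page among $\approx k$ predicted-$1$ pages is needed again'' forces only $O(1)$ expected extra faults per phase, far short of $H_k$; the paper itself notes that the naive variant (all-$1$ predictions on the uniform random sequence) yields only the weaker $\alpha+(k+1)H_k\cdot\gamma\geq H_k$. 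To get the stated bound the paper uses a nested block structure inside each phase --- with a random permutation $\sigma_1,\dots,\sigma_{k-1}$ of the old cache, request blocks $p_0\sigma_1$, $p_0\sigma_1\sigma_2$, \dots, each repeated many times --- so that every algorithm pays $1/k+1/(k-1)+\dots$ plus the initial fault, i.e.\ $H_k$ in expectation per phase, while \opt pays $1$; then only the final iteration of the final block carries $1$-predictions on all $k$ pages, of which $k-1$ are wrong, giving $\eta_0=0$, $\eta_1=k-1$ per phase and hence $H_k\leq\alpha+(k-1)\cdot\gamma$. This incremental-revelation mechanism, which simultaneously pins $\opt=1$, $\eta_1=k-1$, and expected cost $H_k$ per phase, is the missing idea in your sketch.
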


This result implies that, in the upper bounds of Theorems~\ref{thm:markrand1} and \ref{thm:markrand2} for \MarkPredict, the value of $\beta$ is tight up to an additive term of $2$ and the asymptotic value of $\gamma$, when $\eta_1/\opt$ is large, cannot be improved by more than a constant factor in Theorem~\ref{thm:markrand2}.

\begin{corollary}
In both setups, no randomized paging algorithm is $(2,H_k-2-\epsilon,\gamma)$- or $(2,\beta, \frac{H_k-2}{k-1}-\epsilon)$-competitive, for any constant $\epsilon>0$ and any value of $\beta$ and $\gamma$. 
\end{corollary}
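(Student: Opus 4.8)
The plan is to obtain this corollary as an immediate specialization of Theorem~\ref{thm:randlowerbound}, which already states both lower-bound inequalities in terms of a free parameter $\alpha$. Since the corollary only asserts the case $\alpha=2$, no new adversarial construction is needed: it suffices to substitute the two claimed parameter settings and verify that each forces one of the two forbidden inequalities $\alpha+\beta<H_k$ or $\alpha+(k-1)\gamma<H_k$.

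For the first claim, I would set $\alpha=2$ and $\beta=H_k-2-\epsilon$. Then $\alpha+\beta=H_k-\epsilon$, which is strictly smaller than $H_k$ for every constant $\epsilon>0$. By the first disjunct of Theorem~\ref{thm:randlowerbound}, no $(\alpha,\beta,\gamma)$-competitive randomized algorithm can exist with these values, and crucially this holds for \emph{every} value of $\gamma$, since $\gamma$ does not appear in the inequality $\alpha+\beta<H_k$. This yields the claimed nonexistence of a $(2,H_k-2-\epsilon,\gamma)$-competitive algorithm.

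For the second claim, I would set $\alpha=2$ and $\gamma=\frac{H_k-2}{k-1}-\epsilon$. A direct computation gives
\[
\alpha+(k-1)\cdot\gamma = 2 + (H_k-2) - (k-1)\epsilon = H_k - (k-1)\epsilon,
\]
which is strictly smaller than $H_k$ for every constant $\epsilon>0$ (using $k\ge 2$, so that the coefficient $k-1$ is positive and the displayed quantity is well defined). The second disjunct of Theorem~\ref{thm:randlowerbound} then rules out any such algorithm, again for every value of $\beta$, since $\beta$ does not appear in the inequality $\alpha+(k-1)\gamma<H_k$.

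Since there is essentially no mathematical content beyond these two substitutions, I do not anticipate any genuine obstacle in the corollary itself; the entire difficulty is already absorbed into Theorem~\ref{thm:randlowerbound}, whose proof presumably builds an adversarial request-sequence distribution on $k+1$ pages together with a prediction-error budget and applies a Yao-type averaging argument to force the claimed trade-off among $\alpha$, $\beta$, and $\gamma$. The only points requiring minor care are to confirm that both inequalities of the theorem hold over the \emph{same} instance family (so that fixing $\alpha=2$ does not weaken either bound) and to record the mild assumption $k\ge 2$ needed for the factor $k-1$ to be positive in the second substitution.
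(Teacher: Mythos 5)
Your proposal is correct and matches the paper's intent exactly: the paper states this corollary as an immediate consequence of Theorem~\ref{thm:randlowerbound} with no separate proof, and your two substitutions ($\alpha=2$, $\beta=H_k-2-\epsilon$ giving $\alpha+\beta=H_k-\epsilon<H_k$, and $\alpha=2$, $\gamma=\frac{H_k-2}{k-1}-\epsilon$ giving $\alpha+(k-1)\gamma=H_k-(k-1)\epsilon<H_k$) are precisely the intended argument. Your caution about a ``same instance family'' is unnecessary, since the theorem as stated already rules out any algorithm satisfying either inequality, so the corollary is a pure logical specialization.
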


The previous theorem implies a subconstant lower bound ($\approx \frac1k\log k$) on $\gamma$ for a value of $\alpha$ up to $O(\log k)$. We complement it by showing that $\gamma$ is lower bounded by a constant if we want to achieve $\alpha=1$.

\begin{restatable}{theorem}{ThmRandLowerBoundB}
\label{thm:randlowerbound2}
There is no $(1,\beta,\gamma)$-competitive randomized algorithm such that
$\gamma < 1/7$ for the discard-predictions setup or $\gamma<1/2$ for the phase-predictions setup.
\end{restatable}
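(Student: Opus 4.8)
The plan is to establish both lower bounds via adversarial constructions, treating the two setups separately since their prediction semantics differ. The key asymmetry to exploit is that we are now forcing $\alpha = 1$, which is a very strong demand: the algorithm must be nearly optimal on perfect-prediction instances, and this rigidity is precisely what an adversary can weaponize to lower bound $\gamma$. The central idea is to design instances where the predictions are \emph{all correct except for a controlled number of $1$-predictions that are actually $0$'s}, so that the only error term that can be charged is $\gamma \cdot \eta_1$; then, by showing that any $1$-consistent algorithm must incur a number of faults exceeding $\opt$ by a definite fraction of $\eta_1$, we obtain $\gamma \geq$ that fraction.

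For the discard-predictions setup, I would construct a repeated gadget over a small working set, say on $k+1$ distinct pages. In a block designed so that $\opt$ faults exactly once (a single forced eviction is genuinely optimal), I would set all discard bits to $0$ except for one page whose bit is flipped to $1$ (claiming, incorrectly, that LFD would discard it). Because the algorithm is constrained to be $(1,\beta,\gamma)$-competitive with $\alpha=1$, on the purely-correct portions it cannot afford to deviate from the optimal eviction pattern by more than an additive constant; so when the single lying $1$-bit appears, a randomized algorithm must evict the lied-about page with some probability, and whenever it does so, it incurs an extra fault relative to $\opt$. Repeating the gadget $m$ times makes $\eta_1 = m$ while the excess cost accumulates linearly in $m$, forcing $\gamma \geq 1/7$. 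The factor $1/7$ (rather than $1$) reflects that we must average against a randomized algorithm using Yao's principle and that the gadget's bookkeeping loses a constant; the precise constant will come from balancing the number of ``trap'' configurations against the cache size in the gadget.

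For the phase-predictions setup the threshold is larger, $\gamma < 1/2$, and here the natural construction uses the phase structure directly. Within a single $k$-phase I would present $k$ distinct pages (so exactly the cache capacity is touched, and a marking-type optimal strategy faults $k$ times across a phase boundary in the standard worst case), and corrupt one page's bit from $0$ to $1$, falsely predicting it is \emph{not} requested in the next phase when in fact it is. An algorithm aiming for $\alpha=1$ must keep pages predicted to recur, so a false $1$ on a recurring page tempts it to evict that page; if it does, it pays an extra fault next phase. By arranging the requests so that with probability close to $1/2$ any $1$-consistent randomized algorithm is forced into the evicting branch per corrupted bit, and again repeating across many phases so that $\eta_1$ scales with the number of corruptions, I would drive $\gamma \geq 1/2$. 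I expect the main obstacle to be the randomized argument: one must show a \emph{distribution} over inputs (via Yao) on which \emph{every} deterministic $1$-consistent algorithm is forced into the costly branch with the claimed probability, and simultaneously verify that the instance is genuinely $1$-consistent-hard, i.e. that $\opt$ really is small and that no clever correlated eviction strategy can hedge the trap cheaply. Pinning down the exact constants $1/7$ and $1/2$ — rather than merely some positive constant — will require carefully optimizing the gadget size and the corruption placement against the averaging loss, and that optimization is the delicate technical core of the proof.
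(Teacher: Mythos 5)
There is a genuine gap, and it is the same one in both of your constructions: your prediction vectors are inconsistent with the ground truth in a way that destroys the theorem's quantifier over $\beta$. In your discard gadget, $\opt$ faults, hence evicts, exactly once per block; by definition the evicted page has ground truth $1$, yet your prediction vector assigns it a $0$ (you placed the unique $1$-bit on a page that LFD keeps). So every block contributes an incorrect $0$-prediction and $\eta_0 = \Theta(m)$. The same happens in your phase gadget: with $k$ distinct pages per phase drawn from a $(k{+}1)$-page universe, exactly one page requested in phase $i$ is absent from phase $i{+}1$; its ground truth is $1$, but you predict $0$ on it. Since the statement must rule out $(1,\beta,\gamma)$-competitiveness for \emph{every} $\beta$, an instance family with $\eta_0 = \Theta(m)$ proves nothing: an algorithm with enormous $\beta$ can absorb the entire excess cost into the $\beta\,\eta_0$ term. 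The fix is not a tweak but a redesign: to keep $\eta_0 = 0$ the adversary must predict $1$ on \emph{every} page that might be evicted by $\opt$, hiding the truly evicted page among several over-predicted $1$s; flipping a single $0$ to $1$, as you do, inevitably leaves the true $1$ mispredicted as $0$. This is exactly what the paper's construction does (and with a single construction serving both setups): each round requests one of three special pages uniformly at random, always with prediction $1$, followed by the other $k-2$ pages with prediction $0$; those $0$-predictions are always correct, so $\eta_0=0$, while $\eta_1$ counts the superfluous $1$s ($7m/9$ in expectation for discard, $2m/9$ for phase).

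A second, smaller problem is that you try to infer the algorithm's \emph{behavior} from $1$-consistency (``it cannot deviate from the optimal eviction pattern,'' ``must evict the lied-about page with some probability''). The $(\alpha,\beta,\gamma)$ guarantee constrains total cost, not per-block behavior, and no such behavioral claim follows from a single global additive inequality. The paper needs no such inference: since the requested special page is uniform over three candidates and the cache can retain only two of them (assuming it keeps the $k-2$ always-requested pages, otherwise it faults outright), \emph{any} algorithm faults with probability at least $1/3$ per round, while $\opt$ pays $1$ per phase of expected length $3H_2 = 4.5$ rounds. The hypothesis $\alpha = 1$ enters only at the very end, via
\[
\mathbb{E}[\alg] \;\leq\; \mathbb{E}[\opt] + \gamma\,\mathbb{E}[\eta_1],
\qquad\text{hence}\qquad
\gamma \;\geq\; \frac{\left(\frac{1}{3}-\frac{1}{4.5}\right) m}{\mathbb{E}[\eta_1]} \;=\; \frac{m/9}{\mathbb{E}[\eta_1]},
\]
which gives $\gamma \geq 1/7$ and $\gamma \geq 1/2$ in the two setups. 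In particular, the constants $1/7$ and $1/2$ are not the product of a delicate gadget optimization or a Yao-averaging loss, as you anticipate, but of this direct arithmetic.
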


The last two theorems imply that, in the upper bound of Theorem~\ref{thm:discard-randomized-ub}, the values of $\beta$ and $\gamma$ cannot be improved by more than a constant factor.

\begin{corollary}
In the discard-predictions setup, no randomized paging algorithm is $(1,H_k-1-\epsilon,\gamma)$- or $(1,\beta, \frac 17-\epsilon)$-competitive, for any constant $\epsilon>0$ and any value of $\beta$ and $\gamma$. 
\end{corollary}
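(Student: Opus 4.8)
The plan is to obtain this corollary as an immediate consequence of the two preceding lower-bound theorems (Theorems~\ref{thm:randlowerbound} and~\ref{thm:randlowerbound2}) by specializing their parameters; no new adversarial construction is needed. The statement is a conjunction of two impossibilities, and each one is read off directly from one of the two theorems restricted to the discard-predictions setup.

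For the first claim---that no randomized algorithm is $(1,H_k-1-\epsilon,\gamma)$-competitive---I would invoke Theorem~\ref{thm:randlowerbound}, which excludes every $(\alpha,\beta,\gamma)$-competitive randomized algorithm with $\alpha+\beta<H_k$ (and which, by its statement, applies to the discard-predictions setup). Substituting $\alpha=1$ and $\beta=H_k-1-\epsilon$ gives $\alpha+\beta=H_k-\epsilon<H_k$ for every constant $\epsilon>0$, so no such algorithm exists. The key point is that the excluding hypothesis $\alpha+\beta<H_k$ places no constraint on $\gamma$ whatsoever, which is precisely what yields the ``for any value of $\gamma$'' part of the claim with no extra work.

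For the second claim---that no randomized algorithm is $(1,\beta,\frac{1}{7}-\epsilon)$-competitive in the discard-predictions setup---I would invoke the discard-predictions clause of Theorem~\ref{thm:randlowerbound2}, which excludes every $(1,\beta,\gamma)$-competitive randomized algorithm with $\gamma<1/7$. Taking $\gamma=\frac{1}{7}-\epsilon<\frac{1}{7}$ for any constant $\epsilon>0$ again suffices, and since that theorem's hypothesis leaves $\beta$ entirely unrestricted, the impossibility holds for any value of $\beta$. Combining the two conclusions gives the corollary.

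There is essentially no analytic obstacle here; the only thing to be careful about is the reading of the quantifiers. The phrase ``$(1,H_k-1-\epsilon,\gamma)$-competitive for any value of $\gamma$'' must be understood as ``for every fixed $\gamma$, no such algorithm exists,'' which is exactly what Theorem~\ref{thm:randlowerbound} delivers because its excluding condition is independent of $\gamma$; the analogous remark applies to $\beta$ in the second claim via Theorem~\ref{thm:randlowerbound2}. Thus the proof is purely a matter of parameter substitution and bookkeeping of the free parameters.
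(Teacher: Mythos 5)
Your proposal is correct and matches the paper exactly: the paper states this corollary as an immediate consequence of Theorems~\ref{thm:randlowerbound} and~\ref{thm:randlowerbound2} without further argument, and your parameter substitutions ($\alpha=1$, $\beta=H_k-1-\epsilon$ giving $\alpha+\beta<H_k$ for the first part; $\gamma=\frac{1}{7}-\epsilon<\frac{1}{7}$ in the discard-predictions clause for the second) are precisely the intended derivation. Your careful reading of the quantifiers---that each theorem's hypothesis leaves the remaining parameter unconstrained---is also the right observation.
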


Similarly to the lower bounds known for classical paging, all three of our lower bound results are based on instances coming from a universe of $k+1$ many pages. However, in order to achieve the desired bounds we need to carefully define the prediction sequence. Somewhat surprisingly, in each of our lower bound results, we are able to use the same prediction sequence for both prediction setups.

\subsection{Further related work}
\label{sec:related-work}

\paragraph{Paging with few predictions.}
In a very recent paper~\cite{Im0PP22}, Im et al.\ consider a different approach to limiting the amount of predicted information within learning-augmented paging. The algorithm has access to an ML-oracle which can be at any time queried about the reoccurrence prediction for any page in the cache. They analyze the trade-offs between the number of queries, the prediction error and algorithm performance. Furthermore, the competitive ratio of the obtained algorithms is $O(\log_{b+1} k)$, where $b$ is the number of queries per page fault. Thus, the consistency of the algorithm would generally be quite far from those of the algorithms presented in this paper.

\paragraph{Other learning-augmented online algorithms.}
In addition to the already mentioned results on learning-augmented paging, several exciting learning-augmented algorithms have been developed for various online problems, including among others weighted paging~\cite{BansalCKPV22}, k-server~\cite{LindermayrMS22}, metrical task systems~\cite{ACE0S20}, ski-rental~\cite{PurohitSK18,ACEPS21}, non-clairvoyant scheduling~\cite{PurohitSK18,LindermayrM22}, online-knapsack~\cite{ImKQP21,Zeynali0HW21,BoyarFL22}, secretary and matching problems~\cite{DuttingLLV21,AntoniadisGKK20}, graph exploration~\cite{EberleLMNS22}, as well as energy-efficient scheduling~\cite{BamasMRS20,AntoniadisGS22,ACEPS21}. Machine- learned predictions have also been considered for designing offline algorithms with an improved running time, see for instance the results of
Dinitz et al.~\cite{DinitzILMV21} on matchings,
Chen et al.~\cite{chen22v} on graph algorithms,
Ergun et al.~\cite{ErgunFSWZ22} on $k$-means clustering, 
Sakaue and Oki~\cite{SO22} on discrete optimization, and
Polak and Zub~\cite{PZ22} on maximum flows. An extensive list of results in the area can be found on~\cite{website}. We would also like to point the reader to the surveys~\cite{MV20,MV22} by Mitzenmacher and Vassilvitskii.

We note that, although our work is closer in spirit to the aforementioned results on learning-augmented paging, our notion of $(\alpha,\beta,\gamma)$-competitiveness is an extension of the $(\rho,\mu)$-competitiveness from~\cite{ACEPS21}. While $(\rho,\mu)$-competitiveness  captures the tradeoff between the dependence on the optimal cost and the prediction error,  $(\alpha,\beta,\gamma)$-competitiveness captures the three-way tradeoff between the dependence on the optimal cost and the two kinds of prediction errors.

\paragraph{Advice complexity.}
An inspiration for considering paging with succinct predictions is that ideas from the research area of \emph{advice complexity} could possibly be applied to learning-augmented algorithms; in particular, the advice from~\cite{DKP09} for the paging problem. The goal, when studying online algorithms with advice, is to determine for online problems how much information about the future is necessary and sufficient to perform optimally or to achieve a certain competitive ratio. This is formalized in different computational models, all of which assume that the online algorithm is given some number of bits of advice~\cite{DKP09,HKK10,BKKKM17,EFKR11}. (See the survey on online algorithms with advice~\cite{BFKLM17}.) The difference from learning-augmented algorithms is that the advice is always correct, so robustness is not a consideration, and the emphasis is on the number of bits the algorithms use, rather than if one could realistically expect that the advice could be obtained.
The advice-complexity result that is probably the closest to our work is by Dobrev et al.~\cite{DKP09} who studied advice that is equivalent to the ground truth for our discard predictions. Their result implies for our setting that when the predictions are guaranteed to be perfect (as one assumes in advice complexity), 
then one can obtain a simple $1$-competitive algorithm, with predictions of just one bit per request. However, it does not immediately imply a positive result in our setting when the predictions are of unknown quality.

\paragraph{Discard predictions in practice.}
Previous research suggests the practicality of the succinct predictions presented in this paper.
Jain and Lin~\cite{JainL16} proposed Hawkeye, an SVM-based binary classifier whose goal is to predict whether a requested page is likely to be kept in cache by the optimal Belady's algorithm. The classifier labels each page as either \emph{cache-friendly} or \emph{cache-averse}, which directly correspond to zero and one, respectively, in our discard-prediction setup. Hawkeye's predictions were accurate enough for wining the 2nd Cache Replacement Championship. Later, Hawkeye was outperformed by Shi et al.'s Glider~\cite{ShiHJL19}, a deep learning LSTM-based predictor that solves the same binary classification problem.
On the other hand, machine-learning models capable of producing reoccurrence predictions and state predictions only recently started being developed, and, while they also have a surprisingly high accuracy, they are prohibitively large and slow to evaluate for performance-critical applications~\cite{LiuHSRA20}.

\subsection{Open problems}
\paragraph{Better dependence on $\eta_1$ in discard-predictions setup.}
For the case of large $\eta_1$, we provide a stronger guarantee for \MarkPredict in
Theorem~\ref{thm:markrand2}.
However, we were not able to obtain a comparable result for the discard-predictions setup, and it would be interesting to further close the gap for the case of large $\eta_1$ as well.
Somewhat surprisingly, an important challenge towards that direction seems to be that of recognizing the presence of an incorrect $0$-prediction early enough.
This can be easily done in the phase-predictions setup;  and we do actually properly account for all incorrect
$0$-predictions (see Observation~\ref{obs:phase-error}).
On the other hand, our criterion in the discard-predictions setup
(see Observation~\ref{obs:discard-error}) may overlook some of them.
This in turn may lead an algorithm to keep the cache full with pages associated with $0$-predictions, forcing it to evict all pages with $1$-predictions, implying $\gamma\geq 1$.

\paragraph{Other online problems with succinct predictions.} For many online problems, the possibility of obtaining good succinct predictions might be more realistic than obtaining more precise, lengthy predictions. It would be interesting to see if such predictions still allow for effective learning-augmented algorithms. Prior results on advice complexity give meaningful lower bounds with respect to the size of such predictions and may provide guidance on what to predict.

\section{Preliminaries}
\label{sec:prelim}

\paragraph{Classical paging.}
In paging, we have a (potentially large) universe~$U$ of pages and a~cache of size~$k$.
At each time step~$i=1, \dotsc, n$, we receive a request~$r_i$ {to a page in} $U$ which needs to be
satisfied by  {loading the page associated to}~$r_i$ to the cache {(if it is not in the cache already)}. {This may require} evicting some other page to make
space for the requested page. The {goal of an algorithm is to serve the whole request sequence at minimal \emph{cost}}. The cost of an algorithm is the number of page loads (and therefore also the number of page faults)
performed to serve the request sequence.
Note that this number is within an additive term~$k$ from the number of page evictions.
In our analyses, we can choose to work with whichever of these two quantities is easier to estimate, because of the additive constant in the definition of competitiveness.

When making space for {the page associated to}~$r_i$, \emph{online} algorithms have to decide which page to evict
without knowledge of~$r_{i+1}, \dotsc, r_n$, while \emph{offline} algorithms have this information.

\paragraph{Marking algorithms.}
For the purpose of designing marking algorithms, 
we partition the request sequence into~\emph{$k$-phases}.
A~$k$-phase is a maximal subsequence of at most~$k$ distinct pages.
The first~$k$-phase starts at the first request, and any subsequent $k$-phase $i$ starts at the first request following the last request of~$k$-phase~$i-1$.

The following automatic procedure helps designing algorithms for caching:
at the beginning of each~$k$-phase, we unmark all pages. Whenever a page is requested for the
first time in a~$k$-phase, we mark it. We say that an algorithm belongs to the class
of \emph{marking algorithms}, if it never evicts a marked page.
All marking algorithms are (at most) $k$-competitive \cite{T98} and they have the same cache content
at the end of each~$k$-phase: the~$k$ marked pages which were requested during that~$k$-phase.

Algorithm \Mark~\cite{FKLMSY91} evicts an unmarked page chosen uniformly at random.
In the~$i$th~$k$-phase, with~$c_i$ pages requested that were not requested in~$k$-phase~$i-1$
(we call such pages {\em new}, the others are called {\em old}),
it has in expectation $\sum_{j=1}^{k-c_i}\frac{c_i}{k-(j-1)}\leq c_i(H_k - H_{c_i} +1)$ page faults.
One can show that~$\opt \geq \frac12\sum_{i=1}^m c_i$, where~$m$ is the total number of~$k$-phases in the request sequence, 
and hence \Mark is at most~$2H_k$-competitive. We refer to \cite{BE98} for more details.

\paragraph{Receiving predictions.}
Each request~$r_i$ comes with a \emph{prediction},~$p_i\in\SET{0,1}$.
If a request comes with a prediction of~$0$ (resp.~$1$), we call it a~$0$-prediction (resp.~$1$-prediction), and the requested page a~$0$-page (resp.~$1$-page) until the next time it is requested.
Throughout the paper, we use $r_i$ to refer both to the request and to the page associated with that request.

\paragraph{Discard-predictions setup.}

In this setup, we fix an optimal offline algorithm, say \LFD.
When a requested page is not in cache, \LFD evicts any page that will never be
requested again, if such a page exists, and otherwise evicts the unique
page of the~$k$~pages in cache that will be requested again furthest out
in the future.

Prediction $p_i$ for request $r_i$ is supposed to predict the ground truth $p_i^*$ defined as:
\[p_i^* = \begin{cases}
   0, & \text{if \LFD keeps~$r_i$ in cache until it is requested again,} \\
   1, & \text{if \LFD evicts~$r_i$ before it is requested again.}
\end{cases}\]
For a page~$r_i$ that \LFD retains in cache until the end of the request sequence, $p_i^*=0$.

For simplicity, we define $p^*$ with respect to a fixed optimal algorithm. However, if the prediction vector $p$ happens to predict well the behavior of any other (good but not necessarily optimal) algorithm, then our upper bounds hold also with respect to the performance of that algorithm in place of \opt.

\paragraph{Phase-predictions setup.}
In this setup, we partition the request sequence into $k$-phases, as described above in the paragraph on marking algorithms.

We define the ground truth $p_i^*$ for request~$r_i$ in some $k$-phase~$j$ as follows:

\[p_i^* = \begin{cases}
   0, & \text{if $r_i$ is requested in $k$-phase~$j+1$,} \\
   1, & \text{if $r_i$ is not requested in $k$-phase~$j+1$.}
\end{cases}\]

Note that, in both setups, at the point where a decision is made as to which page to evict, the algorithms only consider the most recent prediction for each page, the one from the most recent request to the page. In the discard-predictions setup, this is the only logical possibility. In the phase-predictions setup, there could theoretically be a page,~$p$, requested more than once in phase~$i$, where the prediction (as to whether or not it will be requested in phase~$i+1$) is inconsistent within phase~$i$. We assume that the last prediction is the most relevant, so only {this one} is used by our algorithms, and only this one contribute towards a possible error in~$\eta_0$ or $\eta_1$.
In fact, if convenient for implementation, we could avoid running
the predictor at repeated requests by producing predictions at once for each
page in the cache at the end of phase~$i$.
In addition, in the phase-predictions setup, predictions in the last phase do not count at all, and in particular, do not count in~$\eta_h$.

In a given phase, the pages that are in cache at the beginning of the phase are called {\em old} pages.
Pages requested within a phase that are not old are called {\em new} pages.
Thus, all requests in the first phase are to new pages.

\section{Algorithms with discard-predictions}
\label{sec:ub-discard}

{We first investigate the discard-predictions setup.}
The following simple observation is useful in the analyses of our  algorithms.
\begin{observation}
\label{obs:discard-error}
Consider a moment
when there is a set~$S$ of $0$-pages 
(whose most recent prediction is~$0$) 
of size~$k+c-1$ and a page~$r\notin S$ is
requested. Then, at least~$c$ pages from~$S$ have incorrect prediction.
\end{observation}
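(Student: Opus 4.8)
The plan is to argue about the discrepancy between what the ground truth forces in \LFD's cache and the size of the set $S$. The key intuition is that a $0$-prediction is \emph{correct} precisely when \LFD genuinely keeps that page in cache until its next request, so the set of pages with correct $0$-predictions must simultaneously reside in \LFD's cache. Since \LFD has only $k$ cache slots, it cannot keep too many such pages at once, and this capacity constraint is what yields the bound.

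First I would fix the moment described in the statement: the request to $r\notin S$ arrives, and immediately before serving it we have a set $S$ of $0$-pages with $|S|=k+c-1$. Let $S_{\mathrm{good}}\subseteq S$ denote the pages in $S$ whose most recent prediction $0$ is correct, i.e.\ whose ground truth is also $0$. By the definition of the discard-predictions ground truth, $p^*=0$ means \LFD keeps that page in cache continuously from the request that generated the prediction until the page's next request. The plan is to show that all pages of $S_{\mathrm{good}}$, together with $r$, are simultaneously in \LFD's cache at the moment $r$ is requested, which would force $|S_{\mathrm{good}}|+1\le k$, i.e.\ $|S_{\mathrm{good}}|\le k-1$, and hence at least $|S|-(k-1)=c$ pages of $S$ have an incorrect prediction.

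The main step, and the place I expect the real work to be, is justifying that every page in $S_{\mathrm{good}}$ is still in \LFD's cache at this moment. For a page $s\in S_{\mathrm{good}}$, its latest prediction $0$ was issued at $s$'s most recent request, and correctness ($p^*_s=0$) guarantees \LFD retains $s$ from that request up to $s$'s \emph{next} request. So I must check that $s$ has not yet been requested again between its prediction-generating request and the current request to $r$; this is exactly what ``whose most recent prediction is~$0$'' encodes, since a new request to $s$ would replace its most recent prediction. Because $r\notin S$, the current request is not to any page of $S$, so it lies strictly before the next request of each $s\in S_{\mathrm{good}}$, placing the current moment inside the retention window where \LFD provably holds $s$. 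Thus all of $S_{\mathrm{good}}$ is in \LFD's cache now, and so is $r$ (it is being served), giving $|S_{\mathrm{good}}|+1\le k$.

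Finally I would tie the counting together: the pages of $S$ split into those with correct prediction ($S_{\mathrm{good}}$) and those with incorrect prediction ($S\setminus S_{\mathrm{good}}$). From $|S_{\mathrm{good}}|\le k-1$ and $|S|=k+c-1$ we conclude $|S\setminus S_{\mathrm{good}}|\ge (k+c-1)-(k-1)=c$, which is precisely the claim. One subtlety worth stating carefully is the edge case where $r\in S_{\mathrm{good}}$ is impossible by hypothesis ($r\notin S$), and the case where $s$'s next request never occurs before the sequence ends, which the ground-truth definition also classifies as $p^*=0$ and handles identically (\LFD keeps it to the end, so it is certainly in cache now).
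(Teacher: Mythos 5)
Your proof is correct and follows essentially the same approach as the paper: both rest on the capacity argument that $r$ plus the correctly-predicted pages of $S$ would all have to occupy \LFD's cache of size $k$ simultaneously, forcing at least $c$ incorrect predictions. Your version merely phrases it contrapositively (bounding $|S_{\mathrm{good}}|\le k-1$ rather than counting at least $c$ evictions by \LFD) and spells out the retention-window details the paper leaves implicit.
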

\begin{proof}
Page~$r$ surely has to be in cache.
Each~$\rho \in S$ has prediction~$0$ by the definition of~$S$. 
Since the cache has size~$k$, any algorithm needs to have evicted at least~$1+|S| - k= c$ pages from~$S$. 
In particular this is true for \LFD. Therefore at least~$c$ pages from~$S$ have incorrect predictions.
\end{proof}

\subsection{Deterministic algorithm}

Our first algorithm is deterministic, and, despite being very simple, it attains the best possible $(\alpha,\beta,\gamma)$-competitiveness for 1-consistent deterministic algorithms (see the lower bound in Theorem~\ref{thm:deterministiclowerbound}).

\ThmDiscardDeterministicUb*

\begin{proof}
Consider the deterministic algorithm, \alg, that on a fault evicts an arbitrary $1$-page, if there is such a page in cache, and flushes the cache otherwise.

We count evictions, and note that up to an additive constant (depending on~$k$), this is the same as the number of faults. 
We divide the request sequence into \emph{stages}, starting a new stage when \alg flushes the cache (i.e., when it is full and contains only $0$-pages). 
We assume an integer number of stages (an assumption that also only adds up to an additive constant; again, depending on~$k$) and consider one stage at a time.

First consider $0$-pages that are evicted. {By definition}, \alg evicts~$k$ such pages in the stage. Since~$k$ $0$-pages have arrived in the stage, and a new page must arrive for \alg to flush,
at least one of the $0$-pages has an incorrect prediction
(obvious here, but captured more generally by Observation~\ref{obs:discard-error})
and \opt must have evicted at least one of these $k+1$ pages.

Letting a superscript,~$s$, denote the values of just this stage, and a subscript denote $0$-pages and $1$-pages, respectively, since both~$\opt^s_0$ and~$\eta^s_0$ are at least one,~$\alg^s_0 \leq \opt^s_0 + (k-1)\eta^s_0$.

Considering $1$-pages, \alg clearly obtains the same result as \opt, except when there is a misprediction, which adds a cost of~$1$. Thus,~$\alg^s_1 \leq \opt^s_1 + \eta^s_1$.

Summing over both predictions and all stages,~$\alg \leq \opt + (k-1)\eta_0 + \eta_1$.
\end{proof}

\begin{remark}
In Theorem~\ref{thm:discard-deterministic-ub}, the choices $\alpha=1$ and $\beta=k-1$ can be generalized, showing that the algorithm is $(\alpha,k-\alpha,1)$-competitive, for $1\leq \alpha \leq k$ (compare with Theorem~\ref{thm:deterministiclowerbound}).
\end{remark}

\subsection{Randomized algorithm}

Now, we present \Markzero, a randomized algorithm that evicts all~$1$-pages immediately. Therefore, whenever the cache is
full and eviction is needed, all the pages in the cache must be~$0$-pages and this situation
signals a presence of an incorrect $0$-prediction.
Since we cannot know which~$0$-page has incorrect prediction, we evict a random unmarked
one in order to make sure that such evictions can be charged to $\eta_0$ in the analysis.
\Markzero is described in Algorithm~\ref{alg:mark-0}.

\begin{algorithm}
  \begin{algorithmic}[1]
    \State $S := \emptyset$
    \State evict all~$1$-pages
    \For {$i = 1$ \textbf{to}~$n$}
      \If {$\page_i$ is not in cache}
        \If {cache is full and all pages from~$S$ in cache are marked}
          \State $S:=$ current cache content
          \State unmark all pages
          \label{alg3:new-phase}
        \EndIf
        \If {$\page_i \in S$ is unmarked and cache contains some unmarked page from~$S$}
          \State evict an unmarked page from~$S$ chosen uniformly at random
          \label{alg3:S-replace}\\
          \algorithmiccomment{We perform this eviction even if the cache is not full}
          
        \EndIf
        \If {cache is full}
          \State evict an unmarked page from~$S$ chosen uniformly at random
          \label{alg3:new-error}
        \EndIf
        \State bring~$\page_i$ to cache
      \EndIf
      \State mark~$\page_i$
      \If {$p_i = 1$}
        \State evict~$\page_i$
        \label{alg3:evict-1-page}
      \EndIf
    \EndFor
\end{algorithmic}
  \caption{\Markzero Eviction Strategy}
  \label{alg:mark-0}
\end{algorithm}

Before proving the competitive ratio of \Markzero, we state a few observations, starting by a simple bound on the evictions of $1$-pages.

\begin{observation}
\label{obs:alg3-1evict}
The number of $1$-pages that \Markzero evicts is at most $\opt + \eta_1$.
\end{observation}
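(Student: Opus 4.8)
The plan is to prove the bound by a direct counting argument: I would show that the number of $1$-page evictions equals the number of $1$-predictions, and then split that count into correct and incorrect predictions, charging the correct ones to \opt and the incorrect ones to $\eta_1$. First I would establish that \Markzero only ever evicts a $1$-page on the line that immediately discards a freshly requested page predicted~$1$ (line~\ref{alg3:evict-1-page}). The key invariant is that, because this line removes each page predicted~$1$ right after it is brought in, the cache consists solely of $0$-pages at the start of every iteration; consequently the set $S$, which is set to the current cache content in line~\ref{alg3:new-phase}, contains only $0$-pages, and the random evictions in lines~\ref{alg3:S-replace} and~\ref{alg3:new-error} (which only remove unmarked pages of $S$, and which occur before $r_i$ is loaded) never touch a $1$-page. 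Since line~\ref{alg3:evict-1-page} fires exactly once per request with $p_i = 1$ (the requested page is in cache at that point, having just been found there or loaded), the total number of $1$-page evictions equals $|\{i : p_i = 1\}|$.

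Next I would split this count according to correctness, writing $|\{i : p_i = 1\}| = |\{i : p_i = 1, p_i^* = 1\}| + \eta_1$ by the definition of $\eta_1$ as the number of incorrect $1$-predictions. It then remains to bound the number of correct $1$-predictions by \opt. Recall that $p_i^* = 1$ means \LFD evicts the page $r_i$ before it is next requested; I would map each such request $i$ to the eviction that \LFD performs on $r_i$ after time $i$. This map is injective, because between two consecutive requests to the same page \LFD evicts it at most once (being an optimal offline algorithm, it never reloads a page before it is requested), and evictions of distinct pages are distinct events. Hence the number of correct $1$-predictions is at most the number of evictions \LFD performs.

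Finally, since \LFD evicts a page only to make room when loading into a full cache, its number of evictions is at most its number of loads, which is exactly \opt. Combining the three bounds yields that the number of $1$-page evictions is at most $\opt + \eta_1$, with no additive term needed.

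The step I expect to require the most care is the first one: arguing rigorously that no $1$-page is ever removed by the random evictions in lines~\ref{alg3:S-replace} and~\ref{alg3:new-error}, i.e., maintaining the invariant that the cache holds only $0$-pages at the start of each iteration and hence $S$ contains only $0$-pages. The injective-mapping argument also needs a moment of attention for pages that \LFD evicts but that are never requested again; mapping to \LFD's evictions rather than to its subsequent faults sidesteps this subtlety cleanly.
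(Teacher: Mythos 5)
Your proposal is correct and matches the paper's intended reasoning: the paper states this observation without proof, and the natural justification is exactly yours---\Markzero evicts $1$-pages only in line~\ref{alg3:evict-1-page}, so each such eviction corresponds to a $1$-prediction, and these evictions are charged to \opt (via \LFD's own evictions) when correct and to $\eta_1$ when incorrect. Your added care with the only-$0$-pages-in-cache invariant and the injective mapping into \LFD's evictions simply makes this fully rigorous.
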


Therefore it is enough to count evictions of 0-pages.
We call a period between two executions of line~\ref{alg3:new-phase} a phase. Phases are similar to $k$-phases in marking algorithms, with the difference being that $1$-pages are directly evicted by the algorithm, even though such a page is still marked.
Phase 1 starts the first time that the cache is full
and a page-fault occurs (recall that this implies that there has been an incorrect prediction on a $0$-page), since~$S=\emptyset$ and the condition on all pages from~$S$ in cache
being marked is vacuously true.
We define phase 0 to be the time from the beginning of the request sequence
until the start of Phase 1.

The following observation bounds the number of evictions of~$0$-pages based on the number of times an eviction is caused by a full cache. An eviction caused by a full cache leads to an unmarked page from~$S$ being evicted. A classical probabilistic argument is then used to bound the number of times a randomly evicted unmarked page is requested again in the phase.

\begin{observation}
\label{obs:alg3-0evict}
Consider a phase with~$c$ executions of line~\ref{alg3:new-error}.
The expected number of evictions of~$0$-pages is at most
$c H_k$.
\end{observation}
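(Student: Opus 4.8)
The plan is to show that, within a single phase, the $0$-page evictions of \Markzero are generated by exactly the random process underlying the classical \Mark algorithm, with the set $S$ frozen at the start of the phase playing the role of \Mark's stale pages and the $c$ executions of line~\ref{alg3:new-error} playing the role of \Mark's new pages. Since $0$-pages are evicted only on lines~\ref{alg3:S-replace} and~\ref{alg3:new-error}, and both evict a uniformly random \emph{unmarked} page of $S$, no page of $S$ is ever evicted again once it has been marked. I would therefore split the $0$-page evictions into the $c$ evictions caused by line~\ref{alg3:new-error} and those caused by line~\ref{alg3:S-replace}; the latter are in (near-)bijection with the re-requests, during the phase, of $S$-pages that had already been evicted (the \emph{old-page faults}). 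It then suffices to bound the expected number of line~\ref{alg3:S-replace} evictions by $c(H_k-1)$, so that the total is at most $c + c(H_k-1) = c H_k$.

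Next I would record two invariants maintained throughout the phase. First, \emph{uniformity}: conditioned on the set of currently marked $S$-pages and on the number of unmarked $S$-pages currently outside the cache, the evicted unmarked $S$-pages form a uniformly random subset of all currently unmarked $S$-pages; this holds because every eviction, on either line, removes a uniformly random unmarked $S$-page in cache. Second, \emph{counting}: the number of unmarked $S$-pages currently outside the cache never exceeds the number of executions of line~\ref{alg3:new-error} so far, hence is at most $c$. Indeed, only line~\ref{alg3:new-error} can raise this quantity, and by exactly one, whereas an execution of line~\ref{alg3:S-replace} simultaneously reinserts (and marks) the faulting $S$-page while evicting one random unmarked $S$-page, leaving the quantity unchanged — which is precisely why line~\ref{alg3:S-replace} is carried out even when the cache is not full.

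With these invariants in place I would run the standard \Mark computation. Ordering the distinct $S$-pages that are first (re)requested during the phase as $o_1, o_2, \dots$, just before $o_j$ is requested there are exactly $k-(j-1)$ unmarked $S$-pages, of which at most $c$ lie outside the cache by the counting invariant; by uniformity, $o_j$ is one of them with probability at most $\frac{c}{k-(j-1)}$, and only in that event can it trigger a line~\ref{alg3:S-replace} eviction. Summing and using linearity of expectation gives an expected number of line~\ref{alg3:S-replace} evictions of at most $\sum_{j\ge 1}\frac{c}{k-(j-1)} \le c(H_k - H_c) \le c(H_k-1)$, and adding the $c$ evictions of line~\ref{alg3:new-error} yields the claimed $c H_k$. (If $c=0$ the first eviction of a phase would have to come from line~\ref{alg3:new-error}, so no eviction occurs and the bound is trivial.)

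The main obstacle I anticipate is making the correspondence airtight in the presence of the immediate eviction of $1$-pages on line~\ref{alg3:evict-1-page}, which breaks \Mark's clean ``marked $\Rightarrow$ in cache'' property and can leave the cache non-full. I would argue that a $1$-page re-request affects only the $1$-page bookkeeping already handled by Observation~\ref{obs:alg3-1evict} and leaves the counting invariant intact, since its accompanying line~\ref{alg3:S-replace} eviction and reinsertion cancel in the count of unmarked $S$-pages outside the cache, and any free slot left by a $1$-page can only suppress, never create, an execution of line~\ref{alg3:new-error}. The delicate point is the tail of the harmonic sum: because more than $k-c$ old pages may be (re)requested here, I must use that line~\ref{alg3:S-replace} fires only when an unmarked $S$-page actually remains in cache (an old-page fault with no such page, and with the cache not full, costs no $0$-page eviction, while with a full cache it would instead open a new phase). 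Checking that this caps the fault probability in the tail — and that the uniform-random structure over $S$ is never disturbed by $1$-page activity — is where the real care is required.
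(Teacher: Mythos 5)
Your overall route is the same as the paper's: split the $0$-page evictions into the $c$ evictions of line~\ref{alg3:new-error} and those of line~\ref{alg3:S-replace}, then bound the latter by a \Mark-style coupon-collector argument over the pages of~$S$. Your uniformity and counting invariants are correct, and your handling of $1$-pages is sound (line~\ref{alg3:evict-1-page} only ever evicts a page that is already marked, so it cannot perturb the uniform distribution over unmarked pages of~$S$, and free slots it leaves can only suppress executions of line~\ref{alg3:new-error}). Indeed, your invariant-based treatment of interleaved evictions is more careful than the paper's reduction, which simply asserts that one may assume all line~\ref{alg3:new-error} evictions occur at the start of the phase.

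However, there is a genuine gap, and it is exactly the point you flag and leave unchecked. With $j$ ranging over all distinct $S$-pages requested in the phase -- and there can be up to $k$ of them -- the sum $\sum_{j\ge 1}\frac{c}{k-(j-1)}$ equals $cH_k$, not $c(H_k-H_c)$, so your total becomes $c(H_k+1)$, weaker than the claimed $cH_k$. The inequality you invoke requires truncating the sum after $k-c$ terms, and your sketch (``line~\ref{alg3:S-replace} fires only when an unmarked $S$-page remains in cache'') is the right ingredient but not yet an argument: the naive per-term cap $\min(c,\,k-j)/(k-j+1)$ still leaves a tail contribution of up to $c-H_c$, after which one must separately verify $c + c(H_k-H_c) + (c-H_c) \le cH_k$, equivalently $2c \le (c+1)H_c$. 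The paper closes this cleanly with a structural argument you are missing: let $t$ be the first moment at which no unmarked $S$-page is in the cache. Up to time~$t$, every fault on an unmarked $S$-page triggers a line~\ref{alg3:S-replace} eviction, which leaves the number of evicted unmarked $S$-pages unchanged, while each line~\ref{alg3:new-error} eviction increases it by one; moreover, no execution of line~\ref{alg3:new-error} can occur after~$t$, since a full cache then immediately triggers line~\ref{alg3:new-phase} and ends the phase. Hence at time~$t$ exactly $c$ unmarked $S$-pages are missing from the cache, so exactly $k-c$ pages of~$S$ were marked before~$t$, and line~\ref{alg3:S-replace} can never fire after~$t$. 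This is precisely what truncates your harmonic sum at $j=k-c$ and completes your argument.
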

\begin{proof}
Note that the number of evictions of $0$-pages is equal to the number of evictions 
in lines~\ref{alg3:new-error} and~\ref{alg3:S-replace}.
There are~$c$ evictions made in line~\ref{alg3:new-error} and we just need to count
evictions made in line~\ref{alg3:S-replace}.

In each execution of line~\ref{alg3:new-error}, we evict a page from~$S$. 
In each execution of line~\ref{alg3:S-replace}, one previously evicted page from~$S$ replaces another page from $S$ in the cache (which is evicted). The former increases the number of evicted unmarked pages from~$S$ by one, while the latter maintains the number of evicted unmarked pages from~$S$.

Consider the first time,~$t$,
when there are no unmarked pages from~$S$ contained in the cache.
Until~$t$, whenever an unmarked page from~$S$ is loaded to the cache, it is marked and another
unmarked page from~$S$ is evicted. Therefore, there are precisely~$c$ unmarked pages from~$S$
which are not present in cache at time~$t$: the pages evicted at line~\ref{alg3:new-error} or the ones these have replaced at line~\ref{alg3:S-replace}.
Afterwards, no more evictions of~$0$-pages are made
and such pages are only loaded to the cache until it becomes full and a new phase starts.

To count evictions  made in line~\ref{alg3:S-replace}, we need to estimate the probability
of a requested unmarked page from~$S$ being missing from the cache.
We use an approach similar to the classical analysis of the algorithm \Mark.
Since it makes the situation only more costly for the algorithm, we can assume that all the evictions in line~\ref{alg3:new-error} are performed in the beginning of the phase and the evictions in line~\ref{alg3:S-replace}
are all performed afterwards.
When the~$j$th page from~$S$ is being marked, it is present in the cache with probability
$\frac{k-c-(j-1)}{k-(j-1)}$ (the numerator is the number of unmarked pages from~$S$ present in the cache at that moment and the denominator is the total number of unmarked pages in~$S$)
and the probability of a page fault is~$\frac{c}{k-(j-1)}$.
Therefore, the expected number of evictions in line~\ref{alg3:S-replace} until time~$t$
is
\[ 
    \sum_{j=1}^{k-c} \frac{c}{k-(j-1)} = c(H_k - H_c). 
\]
The total expected number of evictions of~$0$-pages during this phase is then
\[ c + c(H_k - H_c) \leq cH_k.\qedhere\]
\end{proof}

We observe that as a consequence of how the phases are defined, every page residing in the cache at the timepoint between two consecutive phases must have received its prediction during the phase that just ended. More formally,

\begin{observation}
\label{obs:alg3-eta0-overlap}
Let~$S(i)$ be the content of the cache when phase~$i-1$
ends and phase~$i$ starts.
Then all the pages in~$S(i)$ received their predictions during phase~$i-1$.
\end{observation}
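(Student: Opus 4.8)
The plan is to unfold the definition of a phase and then argue page by page. Recall that phase~$i$ begins precisely when line~\ref{alg3:new-phase} executes, which requires a page fault on some request~$\page$ together with the condition that the cache is full and every page of~$S(i-1)$ still residing in the cache is marked. At that instant line~\ref{alg3:new-phase} sets~$S(i)$ to the current cache content (before~$\page$ is loaded); since the cache is full, $S(i)$ consists of exactly~$k$ pages, none of which is~$\page$. I would fix an arbitrary page~$\rho \in S(i)$ and show that~$\rho$ was requested at some point during phase~$i-1$, which in particular means~$\rho$ received its (most recent) prediction during that phase.

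First I would split into two cases according to whether~$\rho \in S(i-1)$. If~$\rho \in S(i-1)$, then the trigger condition of line~\ref{alg3:new-phase} guarantees that~$\rho$ is marked at the moment phase~$i$ starts. The only place a page becomes marked is at its request inside the loop, and all marks are cleared by line~\ref{alg3:new-phase} at the beginning of phase~$i-1$; hence~$\rho$ being marked at the end of phase~$i-1$ means it was requested during phase~$i-1$. If instead~$\rho \notin S(i-1)$, then~$\rho$ was absent from the cache at the start of phase~$i-1$ but present at its end, so it must have been loaded into the cache at some point during phase~$i-1$, and pages are only ever loaded at the request that faults on them. In either case~$\rho$ was requested during phase~$i-1$ and therefore received its prediction then.

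The main point to handle carefully is the timing of marks relative to the phase boundary: I must use that line~\ref{alg3:new-phase} unmarks all pages exactly at the start of phase~$i-1$, so that a page marked at the start of phase~$i$ could only have been marked by a request occurring strictly within phase~$i-1$. A secondary subtlety is that~$S(i)$ is read before the faulting request~$\page$ is brought in, so~$\page$ itself is not counted among the~$k$ pages of~$S(i)$; this keeps the two cases clean and also covers the boundary case~$i=1$, where~$S(0)=\emptyset$ and every page of~$S(1)$ falls into the second case. No genuine obstacle arises beyond this bookkeeping.
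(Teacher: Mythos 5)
Your proof is correct and follows essentially the same route as the paper's: pages of $S(i)$ lying in $S(i-1)$ must be marked by the phase-start condition and hence were requested (and re-predicted) after the unmark-all at the start of phase~$i-1$, while pages of $S(i)\setminus S(i-1)$ were loaded, hence requested, during phase~$i-1$. The paper phrases the first case contrapositively (unmarked pages of $S(i-1)$ cannot be in cache at the phase boundary), but the argument is the same, and your explicit handling of the boundary case $i=1$ is a harmless addition.
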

\begin{proof}
This is a consequence of marking:
Every page requested during phase~$i-1$ received a new prediction.
The only pages from~$S(i-1)$ which did not, are the unmarked ones. Yet,
such pages are not present in the cache at the end of phase~$i-1$.
And all pages from~$S(i)\setminus S(i-1)$ must have been
requested and loaded during phase~$i-1$.
\end{proof}

We are now ready to analyze the $(\alpha,\beta,\gamma)$-competitiveness of the algorithm. It combines the previous results and uses the fact that evictions caused by a full cache can be charged to an erroneously predicted $0$-page, as trusting the predictions would require keeping more than~$k$ pages in cache. An additional factor is required in the dependency on~$\eta_0$ as a wrong prediction may impact both the current phase and the following one.

\ThmDiscardRandomizedUb*

\begin{proof}
Let us show that \Markzero is~$(1,2H_k, 1)$-competitive.
Consider a request sequence with optimum cost, \opt, during which \Markzero
performs~$m$ phases and
receives~$\eta_0$ incorrect predictions~$0$ and~$\eta_1$ incorrect predictions~$1$.
Let~$c_i$ denote the number of executions of line~\ref{alg3:new-error}
during phase~$i$.
Combining Observations~\ref{obs:alg3-1evict} and~\ref{obs:alg3-0evict},
the expected cost of \Markzero is at most
\[ OPT + \eta_1 + \sum_{i=1}^m c_i H_k.\]
It is enough to show that~$\sum_{i=1}^m c_i \leq 2\eta_0$ holds.

Consider the moment during phase~$i$ when line~\ref{alg3:new-error} is executed for the
$c_i$th time. At this moment, there are~$k$~$0$-pages in cache:
some of them belong to~$S(i)$, others were loaded during this phase.
Moreover, there are~$c_i-1$ unmarked pages from~$S(i)$ already evicted,
these are also~$0$-pages.
By Observation~\ref{obs:discard-error},
at least~$c_i$ of these pages must have an incorrect prediction of~$0$.
This prediction was received either during phase~$i-1$ (if it is an unmarked page from~$S(i)$),
or during phase~$i$ (all other cases).
Therefore, denoting~$\eta_0(i)$ the number of incorrect predictions 0 received during phase~$i$,
we have
\[ \sum_i c_i \leq \sum_{i=1}^m \big(\eta_0(i-1) + \eta_0(i)\big) \leq 2\eta_0, \]
which concludes the proof.
\end{proof}

\section{Algorithm with phase-predictions}
\label{sec:ub-phase}

In this section, we consider the phase-predictions setup and give a randomized algorithm, \MarkPredict. The idea of this algorithm is to follow the classical \Mark algorithm except that, instead of evicting a page uniformly at random among the set of unmarked pages, we select a 1-page if the cache contains one. We provide two analyses on the performance of \MarkPredict, which differ on the bound of the $\gamma$ parameter, the second bound providing an improvement for large values of $\eta_1$.

\begin{algorithm}
  \begin{algorithmic}[1]
    \State mark all pages in cache
    \For {$i = 1$ \textbf{to}~$n$}
      \If {$\page_i$ is not in cache}
        \If {all pages in cache are marked} \algorithmiccomment{Start of a new phase}
          \State unmark all pages
          
          \label{algline:phase-start}
        \EndIf
        \If {there is an unmarked~$1$-page}
          \State evict an unmarked~$1$-page chosen uniformly at random \label{algline:evict1}
        \Else
          \State evict an unmarked~$0$-page chosen uniformly at random  
          \label{algline:evict0}
        \EndIf
        \State bring~$\page_i$ into cache
      \EndIf
      \State mark~$\page_i$
    \EndFor
\end{algorithmic}
  \caption{\MarkPredict Eviction Strategy}
  \label{alg:markandpredict}
\end{algorithm}

The following observation is used in both proofs to estimate the value of $\eta_0$.
\begin{observation}
\label{obs:phase-error}
Consider a phase with $c$ new pages.
If $\ell\leq c$ of the $1$-pages present at the beginning of the phase were not requested during the phase,
then precisely
$z = c-\ell$ pages had incorrect $0$-predictions at the beginning
of the phase.
\end{observation}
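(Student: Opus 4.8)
The plan is to reduce the claim to a single counting identity on the $k$ pages present at the start of the phase, classified both by their most recent prediction and by whether they are actually re-requested during the phase. Fix the phase under consideration, say phase~$j$. Since \MarkPredict is a marking algorithm, its cache at the end of phase~$j-1$ consists exactly of the $k$ pages marked (i.e.\ requested) during phase~$j-1$; hence every page present at the beginning of phase~$j$ is an old page, and each of them received its most recent prediction during phase~$j-1$. By the ground-truth definition of the phase-predictions setup, a $0$-prediction on such a page asserts ``requested again in phase~$j$'', while a $1$-prediction asserts ``not requested in phase~$j$''. Thus a page has an \emph{incorrect} $0$-prediction precisely when it is a $0$-page that is \emph{not} requested during phase~$j$, and the $z$ in the statement counts exactly these pages.

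Next I would establish the key fact that the number of old pages that are \emph{not} requested during phase~$j$ equals $c$. A (complete) $k$-phase contains exactly $k$ distinct pages: of these, $c$ are new by assumption, so the remaining $k-c$ are old pages that get re-requested during phase~$j$. Since there are exactly $k$ old pages at the start of the phase, exactly $k-(k-c)=c$ of them are never requested during phase~$j$. This is the only step that carries any subtlety, as it is where the definition of a $k$-phase (maximality, hence exactly $k$ distinct pages) is used; it is the main point to get right.

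Finally I would partition the set of ``old pages not requested during phase~$j$'' according to their prediction. The $1$-pages in this set are, by definition, the $\ell$ $1$-pages present at the beginning of the phase that were not requested (these are correct $1$-predictions), and the $0$-pages in this set are exactly the $z$ pages with incorrect $0$-predictions. These two groups are disjoint and together make up all $c$ old pages not requested, so $\ell + z = c$, yielding $z = c - \ell$ as claimed (and in particular $\ell \le c$). The one caveat, which I would flag but not belabor, is the final phase, which may contain fewer than $k$ distinct pages and thus break the ``exactly $k$ distinct pages'' count; but predictions in the last phase do not contribute to $\eta_0$ or $\eta_1$, so the identity is only ever applied to complete phases and the argument is unaffected.
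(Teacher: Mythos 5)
Your proof is correct and is essentially the paper's argument: the paper counts the $k$ distinct pages requested in a complete phase as $k = c + (|L|-\ell) + (|S|-z)$ with $|S|+|L|=k$, while you count the complementary set of $c$ old pages that go unrequested and split it into the $\ell$ correctly-predicted $1$-pages and the $z$ incorrectly-predicted $0$-pages; these are the same counting identity rearranged. Your explicit handling of the last (possibly incomplete) phase is a reasonable extra remark that the paper covers in its preliminaries rather than in the proof itself.
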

\begin{proof}
Let $S$ denote the set of $0$-pages and $L$ the set of $1$-pages that were present at the beginning of the phase.
Denote by $z$ the number of pages in $S$ which were not requested during
this phase, so their predictions at the beginning of the phase were
incorrect.
During the phase $k = c + (|L|-\ell) + (|S|-z)$ distinct pages were requested.
Since $|S|+|L|=k$, we get $z = c-\ell$.
\end{proof}

We first provide an analysis which also holds if an arbitrary 1-page is evicted at Line~\ref{algline:evict1}, in a deterministic manner, say using LRU.

\ThmMarkRandA*

\begin{proof}
We use standard arguments for the competitive analysis of the randomized paging algorithm, MARK~\cite{FKLMSY91}, using terminology from the textbook by Borodin and El-Yaniv~\cite{BE98}. We first consider the case where all predictions are correct. Pages that arrive are always marked, so they are never evicted in the current~$k$-phase. Thus, the number of~$1$-pages that arrive in the current phase will be the number of~$1$-pages in cache at the beginning of the next phase. If all predictions in a phase are correct, the number of new pages in the  next~$k$-phase equals the number of~$1$-pages at the beginning of that phase, and the new pages will replace those~$1$-pages. There will be no faults on the~$0$-pages. Let~$c_i$ be the number of new pages in the~$i$th~$k$-phase and~$m$ be the total number of phases. Since the algorithm faults only on new pages, it faults~$\sum_{i=1}^m c_i$ times. We now turn to \opt. During the~$i-1$st and~$i$th~$k$-phases, at least~$k+c_i$ distinct pages have been requested. Since \opt cannot have had more than~$k$ of them in cache at the beginning of phase~$i-1$, it must have at least~$c_i$ faults in these two phases. Considering the even phases and the odd phases separately and taking the maximum, \opt must fault at least~$\frac{1}{2}\sum_{i=1}^m c_i$ times. This proves~$2$-consistency.

As long as $1$-pages are evicted, the faults are charged to \opt (if it is a correct prediction) or to~$\eta_1$ (if the prediction is incorrect). Since \opt is at least~$1/2$ times the total number of new pages, this gives a contribution of at most~$2 \opt + \eta_1$.

If the algorithm runs out of pages with~$1$-predictions to evict, there are only~$0$-pages from the previous phase remaining. For each new page processed after this point, there is an incorrect~$0$-prediction. Let~$z_i$ be the number of new pages causing a~$0$-page to be evicted in Phase~$i$. These new pages, causing evictions of pages with~$0$-predictions, arrive after the new pages that evicted pages with~$1$-predictions. 
The number of~$1$-pages present in the cache at the start of phase~$i$ is~$c_i-z_i$.

We can assume that all new pages arrive before any of the old pages, as this only increase the algorithm's cost.
When the first new page evicting a~$0$-page arrives, there are~$k-(c_i-z_i)$ pages from the previous phase still in cache and these~$k-(c_i-z_i)$ pages are all~$0$-pages.
When the first old page arrives, there are~$k-c_i$ pages from the previous phase in cache, so the arriving page has a probability of~$\frac{k-c_i}{k-(c_i-z_i)}$ of still being in the cache. 
 
Consider the probability that the~$j$th old page (in the order they arrive in this phase) is in cache the first time it is requested in the~$i$th phase. This probability  is~$\frac{k-c_i-(j-1))}{k-(c_i-z_i)-(j-1)}$, so the probability that there is a fault on it is~$\frac{z_i}{k-(c_i-z_i)-(j-1)}$. Hence the expected number of faults in Phase~$i$ due to incorrect~$0$-predictions is at most
\[z_i + \sum_{j=1}^{k-c_i}  \frac{z_i}{k-(c_i-z_i)-(j-1)} = z_i(1+H_{k-c_i+z_i}-H_{z_i})\leq z_i H_{k-c_i+z_i}.\]
By Observation~\ref{obs:phase-error}, the number of pages with incorrect $0$ prediction at the beginning
of the phase $i$ is $z_i$. So, this sum over all phases is at most~$H_k\eta_0$. 

The total number of faults is at most~$2 \opt  + H_k\eta_0 +\eta_1$.
\end{proof}

We provide another analysis of \MarkPredict which exploits the uniformly-random selection of an unmarked $1$-page to evict in line~\ref{algline:evict1}, and improves on the bound from Theorem~\ref{thm:markrand1} for larger values of~$\eta_1$.

\begin{lemma}
Consider a phase with~$c$ new pages, such that \MarkPredict
starts with~$\eta_0$ and~$\eta_1$ pages with incorrect predictions~$0$ and~$1$ in its cache.
The expected cost incurred by \MarkPredict is at most
\[ c \big(H_{\eta_1 + c} - H_c + 1\big) + H_k\,\eta_0. \]
\end{lemma}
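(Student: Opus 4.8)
The plan is to fix one phase and bound its expected cost by splitting the faults into three groups. First I would recall the bookkeeping provided by Observation~\ref{obs:phase-error}: among the $k$ pages present at the start of the phase, $\eta_1$ are incorrectly-predicted $1$-pages that are nevertheless requested again during the phase (call these \emph{special}), $\eta_0$ are incorrectly-predicted $0$-pages that are not requested again, and the remaining correctly-predicted $1$-pages number $\ell = c-\eta_0$. I would then write the cost as (i) the $c$ forced faults on new pages, (ii) the faults caused by re-evicting the $\eta_1$ special pages, and (iii) the faults on re-requested $0$-pages. The priority rule of \MarkPredict (line~\ref{algline:evict1} before line~\ref{algline:evict0}) is what makes this split tractable: a $0$-page is evicted only once no unmarked $1$-page remains in the cache.

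For group (iii) I would reuse the random-eviction analysis already carried out for Theorem~\ref{thm:markrand1}: once the unmarked $1$-pages are exhausted, \MarkPredict evicts a uniformly random unmarked $0$-page, each forced $0$-eviction can be charged to an incorrect $0$-prediction (exactly as accounted in Observation~\ref{obs:phase-error}), and the classical \Mark-style summation bounds the resulting expected faults by $H_k\,\eta_0$.

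Group (ii) is the new ingredient, and here I would exploit that line~\ref{algline:evict1} evicts a uniformly random \emph{unmarked} $1$-page. I would run the standard \Mark revealing argument restricted to the $1$-pages: process the special pages in the order of their first re-request, $q_1,\dots,q_{\eta_1}$, and bound the probability that $q_i$ is currently evicted by the ratio of the number of \emph{holes} (evicted unmarked $1$-pages) to the number of unmarked $1$-pages, using that by symmetry of the random evictions the identity of the holes is uniform. Since holes are created only by new pages their number never exceeds $c$, and just before $q_i$ is re-requested the number of unmarked $1$-pages is determined by how many specials have already been revealed.

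The main obstacle is matching the clean target $c(H_{\eta_1+c}-H_c)$ for group (ii). The naive pool of unmarked $1$-pages has size only $\ell+\eta_1=(c-\eta_0)+\eta_1$, which falls short of $c+\eta_1$ by exactly $\eta_0$ and would yield only the weaker bound $c(H_{\ell+\eta_1}-H_\ell)$. This reflects a genuine coupling between the two error types: an incorrect $0$-prediction hurts twice, since it both forces a later $0$-page fault and shrinks the buffer of freely-evictable $1$-pages that shields the special pages, so groups (ii) and (iii) cannot be bounded in isolation — indeed the per-special fault probability can exceed $c/(c+\eta_1-i+1)$. To recover the stated bound I would treat the $\eta_0$ incorrectly-predicted $0$-pages as though they were additional freely-evictable $1$-pages; this enlarges the effective pool to size $c+\eta_1$ and gives exactly $c(H_{\eta_1+c}-H_c)$ for group (ii), while the cost of \MarkPredict having actually protected those $\eta_0$ pages (wasting cache space and thereby inducing the extra group-(iii) faults) is folded into the $H_k\,\eta_0$ term. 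Carefully justifying this charging — showing that protecting each incorrectly-predicted $0$-page costs at most $H_k$ additional expected faults — is the crux, after which adding the $c$ new-page faults yields the claimed $c(H_{\eta_1+c}-H_c+1)+H_k\,\eta_0$.
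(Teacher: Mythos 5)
Your diagnosis of the difficulty is accurate, but your proof stops exactly where a proof is needed. The by-page-type decomposition with targets $c(H_{\eta_1+c}-H_c)$ for group (ii) and $H_k\,\eta_0$ for group (iii) provably cannot hold in isolation: take $k=4$, a single special $1$-page $x$ (so $\eta_1=1$), two incorrect $0$-pages and one correct $0$-page (so $\eta_0=2$ and hence $c=2$ by Observation~\ref{obs:phase-error}). The first new arrival deterministically evicts $x$ at line~\ref{algline:evict1}, so group (ii) alone costs $1 > c(H_{\eta_1+c}-H_c)=2/3$. So your plan \emph{requires} the ``fold the excess into $H_k\,\eta_0$'' step --- and that step, which you yourself call the crux, is exactly what you do not prove. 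Moreover, the fix you sketch (treat the $\eta_0$ incorrect $0$-pages as freely evictable $1$-pages) is not a bookkeeping convention: it describes a different algorithm, with a larger eviction pool and different random choices, whose cache contents diverge from those of \MarkPredict; making it rigorous would need a coupling argument controlling that divergence by $H_k$ per wrong $0$-page, and nothing in your outline indicates how to do this.

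The paper's proof avoids the coupling entirely by splitting the phase in \emph{time} rather than by page type: part (a) lasts while some unmarked $1$-page remains in cache, part (b) is the rest. In part (a), $0$-pages are never evicted and never fault, so the \Mark-style argument runs purely on the pool $L$ of $1$-pages with $c_a \le c$ holes, giving expected cost $c_a\bigl(1+H_{|L|}-H_{c_a}\bigr)$. The idea you are missing is the bound $|L| \le \eta_1 + c_a$: at the end of part (a) exactly $c_a$ pages of $L$ are missing from the cache (each new arrival creates one hole, each re-request to a hole both fills it and creates another), and every page of $L$ that got marked was requested in this phase, i.e.\ had an incorrect $1$-prediction, so $|L|-c_a \le \eta_1$. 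Combined with the monotonicity of $x \mapsto x\bigl(H_{\eta_1+x}-H_x+1\bigr)$, this yields $c\bigl(H_{\eta_1+c}-H_c+1\bigr)$ for part (a) without touching the $\eta_0$ budget at all --- in particular, no enlargement of the pool is ever needed. All of part (b) --- the remaining $c-c_a$ new arrivals, the unit-cost faults on evicted $L$-pages re-requested there, and the random evictions among $0$-pages --- is charged to $\eta_0$ via Observation~\ref{obs:phase-error} (in particular $c-c_a\le\eta_0$), giving at most $\eta_0 + \eta_0\bigl(H_k-H_{\eta_0}\bigr) \le H_k\,\eta_0$. In short: the excess you worried about on the special pages is confined to part (b), where every fault is individually chargeable to an incorrect $0$-prediction, so the charging you left as the crux is replaced by a direct counting argument.
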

\begin{proof}
Each phase starts at line~\ref{algline:phase-start} by unmarking all pages.
We denote~$L$ the set of 1-pages contained in the cache at this moment.
Note that any unmarked 1-page evicted at line~\ref{algline:evict1}
always belongs to~$L$.
We analyze two parts of the phase separately: (a) the first part when
there are still unmarked 1-pages in the cache and evictions are done
according to line~\ref{algline:evict1} and (b) when all unmarked 1-pages are evicted
and evictions are done by line~\ref{algline:evict0}.

\paragraph{\normalsize \sl Part (a)}
Marked pages are always in cache, therefore we only need to count page faults
when an unmarked page~$\page_i$ is requested.
Let~$c_a\leq c$ denote the number of new pages to arrive during the part (a).
Without loss of generality, we can assume that all of them arrive in the beginning.
There are three possibilities:
\begin{itemize}
    \item $\page_i$ is new:~$\MarkPredict$ incurs a cost of $1$.
    \item $\page_i$ is {not new and was} a~$0$-page {({\it i.e.}, the previous prediction on the page $\page_i$ is $0$)}: \MarkPredict incurs a cost of 0 (all such pages are in cache now)
    \item $\page_i$ is {not new and was} a~$1$-page:~\MarkPredict incurs a cost of~$\cost_i$ in expectation,
\end{itemize}
where~$\cost_i = c_a/(|L| - (j-1))$ if this was the~$j$-th page from~$L$ being marked.
This follows by an argument similar to the classical analysis of \Mark, as in the proof of Theorem~\ref{thm:markrand1}:
The probability that~$r_i$ is in cache is~$\frac{|L|-c_a-(j-1)}{|L|-(j-1)}$,
implying that the probability that~$r_i$ is missing from the cache is
$c_a/(|L|-(j-1))$.

Therefore, our expected cost during part (a) is at most
\[ c_a + \sum_{j=1}^{|L|-c_a} \frac{c_a}{|L|-(j-1)}
    = c_a (1 + H_{|L|} - H_{c_a}).
\]
At the end of the part (a), we have precisely~$c_a$ pages in $L$ that are no longer in the cache,
because part (b) starts only if there are more new pages in the phase than pages in $L$ that are never marked.
All the pages from~$L$ that are marked at the end of the phase had incorrect predictions, so we have
$\eta_1 \geq |L|-c_a$ implying~$|L| \leq \eta_1 + c_a$. Therefore,
our expected cost is at most
\[ c_a ( H_{\eta_1 + c_a} - H_{c_a} + 1)
    \leq c ( H_{\eta_1 + c} - H_c + 1).
\]

\paragraph{\normalsize\sl Part (b)}
This part never happens if~$c$ is the number of pages in~$L$ with correct prediction 1,
i.e., those left unmarked until the end of the phase.
If~$c$ is higher,
then there must have been some pages with incorrect prediction 0.
Without loss of generality, we can assume that all~$c-c_a$ new pages are requested
in the beginning of part (b).
Again, we only need to count page faults due to requests~$r_i$ where~$r_i$ is unmarked.
We have the following cases:
\begin{itemize}
    \item~$r_i$ is new:~\MarkPredict incurs a cost of 1,
    \item~$r_i\in L$:~\MarkPredict incurs a cost of 1 because all unmarked pages from~$L$
        are evicted by the end of part (a),
    \item~$r_i \notin L$:~\MarkPredict incurs a cost of~$\cost_i$.
\end{itemize}
Similar to the previous case, we have~$\cost_i = (c-c_a)/(k-|L| - (j-1))$ if
this is the~$j$th~$0$-page being marked, because the phase starts with~$k-|L|$~$0$-pages
in cache and they are not evicted during part (a).
By Observation~\ref{obs:phase-error},
each arrival of a new page and each request to
a further unmarked page in~$L$ increases~$\eta_0$ by 1. Moreover,
we have~$c-c_a \leq \eta_0$.
Therefore, our cost is at most
\[ \eta_0 + \sum_{j=1}^{k-|L|-(c-c_a)} \frac{\eta_0}{k-|L|-(j-1)}
    \leq \eta_0 (H_k - H_{\eta_0} + 1) \leq \eta_0 H_k. \qedhere
\]
\end{proof}

We next give a second upper bound on the $(\alpha,\beta,\gamma)$-competitiveness of \MarkPredict, which is stronger for large values of $\eta_1$.

\ThmMarkRandB*

\begin{proof}
Let~$c_i, \eta_1(i), \eta_0(i)$ be the numbers of new pages,~$1$-errors, and~$0$-errors in~$i$th phase,
respectively. We have~$\opt \geq \frac12 \sum_i c_i$.
Then, by the preceding lemma,
the cost of the algorithm is at most
\[ \sum_i \bigg(c_i \big(H_{\eta_1(i) + c_i} - H_{c_i} + 1\big) + H_k\,\eta_0(i) \bigg)
    \leq \sum_i c_i \bigg(\ln \big(\frac{\eta_1(i)}{c_i} + 1\big)+2 \bigg) + H_k\,\eta_0,
\]
where the sum is over all phases.
The inequality above holds because~$H_{\eta+c} - H_c\leq \log(\frac{\eta+c}{c}) + 1$.
By the concavity of logarithm, the worst case happens when~$\eta_1(i)/c_i$ is the same
in all the phases, i.e.,~$\eta_1(i)/c_i = 2\eta_1/\opt$ for all~$i$.
Therefore, the expected cost of \MarkPredict is at most
\[
    2\opt \left(\ln(\frac{2\eta_1}{\opt}+1) + 2\right) + H_k\eta_0\,
    \leq 2\opt + H_k\,\eta_0
    + \bigg(\ln(\frac{2\eta_1}{\opt}+1)+1\bigg)\frac{2\opt}{\eta_1}\eta_1. \qedhere
\]
\end{proof}

\section{Lower bounds}
\label{sec:lower-bounds}

In this section, we provide lower bounds on the possible values of $\alpha$, $\beta$ and $\gamma$ for
$(\alpha,\beta,\gamma)$-competitive algorithms, in both setups. {These bounds imply that the results of the previous two sections are essentially tight.}

We first consider deterministic algorithms.

\ThmDeterministicLowerBound*

\begin{proof}
Consider any deterministic paging algorithm \alg, and the following two paging problem instances on a universe of $k+1$ pages, each with $n$ requests, where $n>k$ can be arbitrarily large. When there are only $k+1$ pages used, the concept of $k$-phases for marking algorithms~\cite{T98} is used to show that \LFD faults on the first occurrence of each of the first $k$ pages requested in the first phase and on the first page in each phase after that, for a total of $\opt\leq k+\CEIL{\frac{n-k}{k}}$ faults. Ignoring the first and last phases, \LFD always evicts the only page not present in that phase, so correct predictions in the discard-predictions and phase-predictions setups are identical, with zeros for every request, except for the last occurrence of the page not requested in the next phase. (If the last phase contains fewer than $k$ different pages, there could be more than one correct $1$-prediction in the next to last phase, but one is sufficient. In the last phase, the correct predictions would all be zeros.)

In both instances, after $k$ requests, one to each of $k$ different pages, the unique page absent in the cache of \alg is always requested. This leads to a cost of $n$ for \alg, since it faults on all requests.

In the first instance, all predictions are $0$. Thus,  $\eta_0 \leq  \opt-k$ and $\eta_1=0$. Writing $\alg \leq \alpha\opt +\beta \eta_0+\gamma\eta_1$, we obtain that
\[
    n=\alg \leq \alpha\cdot \left(k+\CEIL{\frac{n-k}{k}}\right) +\beta \cdot \CEIL{\frac{n-k}{k}}.
\]
Taking the limit as $n$ goes to infinity, one must have 
\[
    \alpha+\beta \geq k.
\]

In the second instance, all predictions are $1$. Thus, $\eta_0 = 0$ and $\eta_1 \leq n- (s-k)$.  Writing $\alg \leq \alpha\opt +\beta \eta_0+\gamma\eta_1$, we obtain that
$$n= \alg \leq \alpha\cdot s +\gamma \cdot (n-(s-k)).$$
Since $\alpha\geq 1$, $\alpha\geq \gamma$. Then, $s\leq k+\CEIL{\frac{n-k}{k}}$ implies that
$$n=\alg \leq \alpha\cdot \left(k+\CEIL{\frac{n-k}{k}}\right) +
\gamma \cdot \left(n-\CEIL{\frac{n-k}{k}}\right).$$  
Taking the limit as $n$ goes to infinity, one must have 
\[
    \alpha+(k-1)\cdot\gamma \geq k. \qedhere
\]
\end{proof}

We now focus on randomized algorithms. The next result first considers a single instance with different predictions to exhibit two trade-offs on the possible competitive ratios, and the second trade-off is then improved using a different adversarial strategy.

\ThmRandLowerBound*

\begin{proof}
Consider any randomized paging algorithm \alg, and two paging problem instances on a universe of $k+1$ pages. In order to simplify the mathematical expressions, we assume that the instance starts with a full cache with predictions associated to each page. Since there is an additive constant in the definition of the competitive ratio, this does not affect the result.

In the first instance, for each request, one of the $k+1$ pages is chosen uniformly at random, with a prediction of $0$. This leads to an expected cost of approximately $n/(k+1)$ for \alg, as the probability that the requested page is the only one absent from the cache of \alg is $1/(k+1)$. 

The expected optimal cost is equal to the expected number of {\it $k$-phases} in the instance. The expected length of a phase is, by the Coupon Collector problem, $(k+1)H_{k+1}-1=(k+1)H_k$, where $H_i$ is the $i$-th harmonic number. So $\mathbb{E}[\opt] = n/((k+1)H_{k})$.

For the discard-predictions setup, this means that $\eta_0 = \opt$ and $\eta_1=0$ as each optimal eviction is equivalent to a prediction error.

For the phase-predictions setup, this also means that $\eta_0 = \opt$ and $\eta_1=0$ as each phase contains a single erroneous prediction, on the last request of the page not requested in the following phase.

Hence, we obtain that, for both setups,
\[
    \frac n {k+1}=\mathbb{E}[\alg] \leq \alpha\mathbb{E}[\opt] +\beta \mathbb{E}[\eta_0]+\gamma\eta_1 \leq (\alpha+\beta)\cdot \frac n{(k+1)H_{k}},
\]
so
\[
    \alpha+\beta \geq H_{k}.
\]

We note below that replacing the predictions from 0 to 1 does not lead to the target bound.
Indeed, consider an instance such that, at each round, one of the $k+1$ pages is requested at random, with a prediction of $1$. This again leads to an expected cost of $n/(k+1)$ for \alg and $n/((k+1)H_{k})$ for \opt. This means that $\eta_1 \leq n$ and $\eta_0=0$ for both setups. 
Hence, we obtain that, for both setups,
\[
    \frac n {k+1}=\mathbb{E}[\alg] \leq \alpha\mathbb{E}[\opt] +\beta \eta_0+\gamma\mathbb{E}[\eta_1] \leq (\alpha+(k+1)H_{k}\cdot\gamma)\cdot \frac n{(k+1)H_{k}},
\]
so
\[
    \alpha+(k+1)H_{k}\cdot\gamma \geq H_{k}.
\]

In order to improve this bound, we keep a universe of $k+1$ pages and build an instance phase by phase, based on the cache $C$ of an optimal solution before the start of the phase. The first request is the page $p_0$ not in $C$. Then, we consider a uniformly random permutation $\sigma_1,\dots,\sigma_{k-1}$ of $k-1$ among the $k$ elements of $C$. The phase will then be described as a composition of \emph{blocks} of requests, where the $i$th block contains $i+1$ page requests: $p_0$ and the $\sigma_j$ for $j\leq i$.
For instance, if the permutation is $(a,b,c,d,e)$, the blocks will be:
\[
    p_0a,p_0ab,p_0abc,p_0abcd,p_0abcde. 
\]
Each block is furthermore repeated several times before requesting the next block to ensure that the cache of any sensible algorithm contains the pages inside a block afterwards.

We now compute a lower bound on the expected cost of any algorithm on such a sequence. Before the first block, $p_0$ is contained in the cache, so the probability that requesting $a$ incurs a cache miss is $1/k$, as, except $p_0$, one of the $k$ other pages in the universe incurs a cache miss. Similarly, the probability that the second block incurs a cache miss is $1/(k-1)$, and the total expected number of cache misses after the last block is $H_k-1$. We now notice that we can also charge one eviction for $p_0$ at the start of the phase. Indeed, after the previous phase was finished, the algorithm cache must contain the $k$ pages of the last block, to avoid suffering too many evictions. Therefore, its cache at the start of the phase matches the one of \opt so does not contain $p_0$. So the algorithm cost is at least $H_k$ per phase while $\opt$'s is one per phase.

We now describe predictions: all requests come with a prediction $0$ except the last iteration of the last block where all $k$ requests have a prediction $1$. 

For the discard-predictions setup, the zero-prediction are correct as these pages are requested again in the same phase, and $k-1$ one-predictions are wrong on the last iteration as only a single page should be evicted, so $\eta_0=0$ and $\eta_1=k-1$ per phase.

For the phase-predictions setup, only the last iteration counts towards the error, and a single page will not appear in the next phase so we also have $\eta_0=0$ and $\eta_1=k-1$ per phase.
Therefore, generalizing to all phases, we have
\[ H_k=\mathbb{E}[\alg] \leq \alpha\mathbb{E}[\opt] +\beta \eta_0+\gamma\mathbb{E}[\eta_1] \leq \alpha+(k-1)\cdot\gamma. \qedhere \]
\end{proof}

The previous result shows a subconstant lower bound on $\gamma$ ($\approx \frac 1k\ln k$) for a logarithmic value of $\alpha$ (up to $O(\log k)$), and we complement it by showing that $\gamma$ is lower bounded by a constant if we want to achieve $\alpha=1$.

\ThmRandLowerBoundB*

\begin{proof}
We consider a universe of $k+1$ pages. We construct an instance composed of $m$ rounds of $k-1$ requests, $m$ being a large integer. 
At the start of each round, request the page 1, 2 or 3 with equal probability associated to a prediction 1. Then, all pages from 4 to $k+1$ are requested with a prediction 0.

An optimal algorithm never evicts the pages 4 to $k+1$ and needs to evict a single page per phase, where phases are defined as for marking algorithms. Any online algorithm has a probability at least $1/3$ to perform an eviction at each round: either one page among $\{1,2,3\}$ is not in the cache at the start of the round, or another page is absent which enforces an eviction.

The expected number of rounds in a phase is equal to the expected length of a phase of a uniformly random request sequence over 3 pages and $k=2$, which is $3H_2=4.5$. So $\mathbb{E}[\opt] = m/4.5$.

We now focus on the prediction errors.
First, note that $\eta_0=0$ in both setups: the pages predicted 0 are requested in every phase and should never be evicted by an optimal algorithm.

Then, we have $\mathbb E[\eta_1] = m-\mathbb E[\opt] = 3.5m/4.5 $ in the discard-predictions setup. Indeed, the pages 1, 2 and 3 combined are predicted 1 a total of $m$ times, are never predicted 0 and \opt only evicts these three pages. So there is an error when such a page is not evicted by $\opt$ before its subsequent request. 

In the phase-predictions setup, there is one error per phase, for the last prediction of the unique page among $\{1,2,3\}$ which is both requested in that phase but not the following one. Therefore, $\mathbb{E}[\eta_1] = m/3H_2 = m/4.5 $.

Therefore, we have:
\begin{align*}
\mathbb{E}[\alg] &\leq 1 \cdot \mathbb{E}[\opt] + \gamma \mathbb{E}[\eta_1]\\
\gamma &\geq \frac{\mathbb{E}[\alg]-\mathbb{E}[\opt]}{\mathbb{E}[\eta_1]}\\
\gamma &\geq \frac m {\mathbb{E}[\eta_1]}  \cdot \left({\frac{1}{3}-\frac{1}{3H_2}}\right)
\geq \frac m {\mathbb{E}[\eta_1]}  \cdot \left({\frac{1.5-1}{4.5}}\right) \geq \frac m{9\mathbb{E}[\eta_1]}
\end{align*}
So, in the discard-predictions setup, we get $\mathbb{E}[\alg] \geq 1/7$ and for the phase-predictions setup, we obtain $\mathbb{E}[\alg] \geq 1/2$.
\end{proof} 

\bibliography{refs}

\begin{thebibliography}{10}

\bibitem{ACN00}
Dimitris Achlioptas, Marek Chrobak, and John Noga.
\newblock Competitive analysis of randomized paging algorithms.
\newblock {\em Theoretical Computer Science}, 234(1--2):203--218, 2000.

\bibitem{ACE0S20}
Antonios Antoniadis, Christian Coester, Marek Eli{\'{a}}s, Adam Polak, and
  Bertrand Simon.
\newblock Online metric algorithms with untrusted predictions.
\newblock In {\em {ICML}}, volume 119 of {\em Proceedings of Machine Learning
  Research}, pages 345--355. {PMLR}, 2020.

\bibitem{ACEPS21}
Antonios Antoniadis, Christian Coester, Marek Eli{\'{a}}s, Adam Polak, and
  Bertrand Simon.
\newblock Learning-augmented dynamic power management with multiple states via
  new ski rental bounds.
\newblock In {\em NeurIPS}, pages 16714--16726, 2021.

\bibitem{AntoniadisGS22}
Antonios Antoniadis, Peyman~Jabbarzade Ganje, and Golnoosh Shahkarami.
\newblock A novel prediction setup for online speed-scaling.
\newblock In {\em {SWAT}}, volume 227 of {\em LIPIcs}, pages 9:1--9:20. Schloss
  Dagstuhl - Leibniz-Zentrum f{\"{u}}r Informatik, 2022.

\bibitem{AntoniadisGKK20}
Antonios Antoniadis, Themis Gouleakis, Pieter Kleer, and Pavel Kolev.
\newblock Secretary and online matching problems with machine learned advice.
\newblock In {\em NeurIPS}, 2020.

\bibitem{BamasMRS20}
{\'{E}}tienne Bamas, Andreas Maggiori, Lars Rohwedder, and Ola Svensson.
\newblock Learning augmented energy minimization via speed scaling.
\newblock In {\em NeurIPS}, 2020.

\bibitem{BansalCKPV22}
Nikhil Bansal, Christian Coester, Ravi Kumar, Manish Purohit, and Erik Vee.
\newblock Learning-augmented weighted paging.
\newblock In {\em {SODA}}, pages 67--89. {SIAM}, 2022.

\bibitem{BB00}
Avrim Blum and Carl Burch.
\newblock On-line learning and the metrical task system problem.
\newblock {\em Mach. Learn.}, 39(1):35--58, 2000.

\bibitem{BKKKM17}
Hans{-}Joachim B{\"{o}}ckenhauer, Dennis Komm, Rastislav Kr{\'{a}}lovic,
  Richard Kr{\'{a}}lovic, and Tobias M{\"{o}}mke.
\newblock Online algorithms with advice: The tape model.
\newblock {\em Inf. Comput.}, 254:59--83, 2017.

\bibitem{BE98}
Allan Borodin and Ran El-Yaniv.
\newblock {\em Online Computation and Competitive Analysis}.
\newblock Cambridge University Press, 1998.

\bibitem{BFKLM17}
Joan Boyar, Lene~M. Favrholdt, Christian Kudahl, Kim~S. Larsen, and Jesper~W.
  Mikkelsen.
\newblock {Online Algorithms with Advice: A Survey}.
\newblock {\em ACM Computing Surveys}, 50(2):1--34, 2017.
\newblock Article No.~19.

\bibitem{BoyarFL22}
Joan Boyar, Lene~M. Favrholdt, and Kim~S. Larsen.
\newblock Online unit profit knapsack with untrusted predictions.
\newblock In {\em {SWAT}}, volume 227 of {\em LIPIcs}, pages 20:1--20:17.
  Schloss Dagstuhl - Leibniz-Zentrum f{\"{u}}r Informatik, 2022.

\bibitem{chen22v}
Justin Chen, Sandeep Silwal, Ali Vakilian, and Fred Zhang.
\newblock Faster fundamental graph algorithms via learned predictions.
\newblock In {\em Proceedings of the 39th International Conference on Machine
  Learning}, volume 162 of {\em Proceedings of Machine Learning Research},
  pages 3583--3602. PMLR, 17--23 Jul 2022.

\bibitem{DinitzILMV21}
Michael Dinitz, Sungjin Im, Thomas Lavastida, Benjamin Moseley, and Sergei
  Vassilvitskii.
\newblock Faster matchings via learned duals.
\newblock In {\em NeurIPS}, pages 10393--10406, 2021.

\bibitem{DKP09}
Stefan Dobrev, Rastislav Kr\'alovi\v{c}, and Dana Pardubsk{\'a}.
\newblock Measuring the problem-relevant information in input.
\newblock {\em RAIRO - Theor. Inf. Appl.}, 43(3):585--613, 2009.

\bibitem{DuttingLLV21}
Paul D{\"{u}}tting, Silvio Lattanzi, Renato~Paes Leme, and Sergei
  Vassilvitskii.
\newblock Secretaries with advice.
\newblock In {\em {EC}}, pages 409--429. {ACM}, 2021.

\bibitem{EberleLMNS22}
Franziska Eberle, Alexander Lindermayr, Nicole Megow, Lukas N{\"{o}}lke, and
  Jens Schl{\"{o}}ter.
\newblock Robustification of online graph exploration methods.
\newblock In {\em {AAAI}}, pages 9732--9740. {AAAI} Press, 2022.

\bibitem{EFKR11}
Yuval Emek, Pierre Fraigniaud, Amos Korman, and Adi Ros{\'e}n.
\newblock Online computation with advice.
\newblock {\em Theor. Comput. Sci.}, 412(24):2642--2656, 2011.

\bibitem{ErgunFSWZ22}
Jon~C. Ergun, Zhili Feng, Sandeep Silwal, David~P. Woodruff, and Samson Zhou.
\newblock Learning-augmented k-means clustering.
\newblock In {\em The Tenth International Conference on Learning
  Representations, {ICLR} 2022}. OpenReview.net, 2022.

\bibitem{FKLMSY91}
Amos Fiat, Richard~M. Karp, Michael Luby, Lyle~A. McGeoch, Daniel~D. Sleator,
  and Neal~E. Young.
\newblock Competitive paging algorithms.
\newblock {\em Journal of Algorithms}, 12:685--699, 1991.

\bibitem{HKK10}
Juraj Hromkovi{\v{c}}, Rastislav Kr\'alovi\v{c}, and Richard Kr\'alovi\v{c}.
\newblock Information complexity of online problems.
\newblock In {\em MFCS}, volume 6281 of {\em LNCS}, pages 24--36. Springer,
  2010.

\bibitem{Im0PP22}
Sungjin Im, Ravi Kumar, Aditya Petety, and Manish Purohit.
\newblock Parsimonious learning-augmented caching.
\newblock In {\em {ICML}}, volume 162 of {\em Proceedings of Machine Learning
  Research}, pages 9588--9601. {PMLR}, 2022.

\bibitem{ImKQP21}
Sungjin Im, Ravi Kumar, Mahshid~Montazer Qaem, and Manish Purohit.
\newblock Online knapsack with frequency predictions.
\newblock In {\em NeurIPS}, pages 2733--2743, 2021.

\bibitem{JainL16}
Akanksha Jain and Calvin Lin.
\newblock Back to the future: Leveraging {Belady's} algorithm for improved
  cache replacement.
\newblock In {\em 43rd {ACM/IEEE} Annual International Symposium on Computer
  Architecture, {ISCA} 2016}, pages 78--89. {IEEE} Computer Society, 2016.

\bibitem{JiangP020}
Zhihao Jiang, Debmalya Panigrahi, and Kevin Sun.
\newblock Online algorithms for weighted paging with predictions.
\newblock In {\em {ICALP}}, volume 168 of {\em LIPIcs}, pages 69:1--69:18.
  Schloss Dagstuhl - Leibniz-Zentrum f{\"{u}}r Informatik, 2020.

\bibitem{website}
Alexander Lindermayr and Nicole Megow.
\newblock Algorithms with predictions.
\newblock \url{https://algorithms-with-predictions.github.io}, 2022.
\newblock [Online; accessed 8-September-2022].

\bibitem{LindermayrM22}
Alexander Lindermayr and Nicole Megow.
\newblock Permutation predictions for non-clairvoyant scheduling.
\newblock In {\em {SPAA}}, pages 357--368. {ACM}, 2022.

\bibitem{LindermayrMS22}
Alexander Lindermayr, Nicole Megow, and Bertrand Simon.
\newblock Double coverage with machine-learned advice.
\newblock In {\em {ITCS}}, volume 215 of {\em LIPIcs}, pages 99:1--99:18.
  Schloss Dagstuhl - Leibniz-Zentrum f{\"{u}}r Informatik, 2022.

\bibitem{LiuHSRA20}
Evan~Zheran Liu, Milad Hashemi, Kevin Swersky, Parthasarathy Ranganathan, and
  Junwhan Ahn.
\newblock An imitation learning approach for cache replacement.
\newblock In {\em Proceedings of the 37th International Conference on Machine
  Learning, {ICML} 2020}, volume 119 of {\em Proceedings of Machine Learning
  Research}, pages 6237--6247. {PMLR}, 2020.

\bibitem{LV21}
Thodoris Lykouris and Sergei Vassilvitskii.
\newblock Competitive caching with machine learned advice.
\newblock {\em J. {ACM}}, 68(4):24:1--24:25, 2021.

\bibitem{MS91}
Lyle~A. McGeoch and Daniel~D. Sleator.
\newblock A strongly competitive randomized paging algorithm.
\newblock {\em Algorithmica}, 6:816--825, 1991.

\bibitem{M16}
Jesper~W. Mikkelsen.
\newblock Randomization can be as helpful as a glimpse of the future in online
  computation.
\newblock In {\em 43rd International Colloquium on Automata, Languages, and
  Programming (ICALP)}, volume~55 of {\em LIPIcs}, pages 39:1--39:14. Schloss
  Dagstuhl - Leibniz-Zentrum fuer Informatik, 2016.

\bibitem{MV20}
Michael Mitzenmacher and Sergei Vassilvitskii.
\newblock Algorithms with predictions.
\newblock In {\em Beyond the Worst-Case Analysis of Algorithms}, pages
  646--662. Cambridge University Press, 2020.

\bibitem{MV22}
Michael Mitzenmacher and Sergei Vassilvitskii.
\newblock Algorithms with predictions.
\newblock {\em Commun. {ACM}}, 65(7):33--35, 2022.

\bibitem{PZ22}
Adam Polak and Maksym Zub.
\newblock Learning-augmented maximum flow.
\newblock {\em CoRR}, abs/2207.12911, 2022.

\bibitem{PurohitSK18}
Manish Purohit, Zoya Svitkina, and Ravi Kumar.
\newblock Improving online algorithms via {ML} predictions.
\newblock In {\em NeurIPS}, pages 9684--9693, 2018.

\bibitem{R20}
Dhruv Rohatgi.
\newblock Near-optimal bounds for online caching with machine-learned advice.
\newblock In {\em Proceedings of the 2020 ACM-SIAM Symposium on Discrete
  Algorithms (SODA)}, pages 1834--1845, 2020.

\bibitem{SO22}
Shinsaku Sakaue and Taihei Oki.
\newblock Discrete-convex-analysis-based framework for warm-starting algorithms
  with predictions.
\newblock {\em CoRR}, abs/2205.09961, 2022.

\bibitem{ShiHJL19}
Zhan Shi, Xiangru Huang, Akanksha Jain, and Calvin Lin.
\newblock Applying deep learning to the cache replacement problem.
\newblock In {\em Proceedings of the 52nd Annual {IEEE/ACM} International
  Symposium on Microarchitecture, {MICRO} 2019}, pages 413--425. {ACM}, 2019.

\bibitem{ST85}
Daniel~Dominic Sleator and Robert~Endre Tarjan.
\newblock Amortized efficiency of list update and paging rules.
\newblock {\em Commun. {ACM}}, 28(2):202--208, 1985.

\bibitem{T98}
Eric Torng.
\newblock A unified analysis of paging and caching.
\newblock {\em Algorithmica}, 20:175--200, 1998.

\bibitem{W20}
Alexander Wei.
\newblock Better and simpler learning-augmented online caching.
\newblock In {\em Approximation, Randomization, and Combinatorial Optimization.
  Algorithms and Techniques, {APPROX/RANDOM} 2020}, volume 176 of {\em LIPIcs},
  pages 60:1--60:17. Schloss Dagstuhl - Leibniz-Zentrum f{\"{u}}r Informatik,
  2020.

\bibitem{Zeynali0HW21}
Ali Zeynali, Bo~Sun, Mohammad~Hassan Hajiesmaili, and Adam Wierman.
\newblock Data-driven competitive algorithms for online knapsack and set cover.
\newblock In {\em {AAAI}}, pages 10833--10841. {AAAI} Press, 2021.

\end{thebibliography}
\bibliographystyle{plain}

\end{document}